\def\eqref#1{equation~\ref{#1}}
\def\1{\bm{1}}
\DeclareMathAlphabet{\mathsfit}{\encodingdefault}{\sfdefault}{m}{sl}
\SetMathAlphabet{\mathsfit}{bold}{\encodingdefault}{\sfdefault}{bx}{n}
\DeclareMathOperator*{\argmin}{arg\,min}
\DeclareMathOperator*{\Argmax}{Arg\,max}
\newtheorem{theorem}{Theorem}[section]
\newtheorem{proposition}{Proposition}[section]
\newtheorem{lemma}{Lemma}[section]
\newtheorem{remark}{Remark}[section]
\newtcolorbox{remarkbox}{
    colback=gray!20,    
    colframe=black,     
    boxrule=0.5mm,      
    sharp corners,      
    left=2mm,           
    right=2mm,          
    top=2mm,            
    bottom=2mm          
}
\title{LoRA meets Riemannion: Muon Optimizer for Parametrization-independent Low-Rank Adapters}
\author{Vladimir Bogachev\\
  HSE University, \\Lomonosov Moscow State University\\
  \texttt{vabogachev@hse.ru}
  \And
  Vladimir Aletov \\
  HSE University\\
  \And
  Alexander Molozhavenko \\
  HSE University\\
  \And
  Denis Bobkov \\
  HSE University\\
  \And
  Vera Soboleva\\
  HSE University\\
  \And Aibek Alanov \\
  HSE University\\
  \And Maxim Rakhuba\\
  HSE University\\
}
\begin{document}

\maketitle

\begin{abstract}
This work presents a novel, fully Riemannian framework for Low-Rank Adaptation (LoRA) that geometrically treats low-rank adapters by optimizing them directly on the fixed-rank manifold. This formulation eliminates the parametrization ambiguity present in standard Euclidean optimizers. Our framework integrates three key components to achieve this: (1) we derive Riemannion, a new Riemannian optimizer on the fixed-rank matrix manifold that generalizes the recently proposed Muon optimizer; (2) we develop a Riemannian gradient-informed LoRA initialization, and (3) we provide an efficient implementation without prominent overhead that uses automatic differentiation to compute arising geometric operations while adhering to best practices in numerical linear algebra. Comprehensive experimental results on both LLM and diffusion model architectures demonstrate that our approach yields consistent and noticeable improvements in convergence speed and final task performance over both standard LoRA and its state-of-the-art modifications.
\end{abstract}

\section{Introduction}%
\label{sec:introduction}

Large language models (LLMs) have demonstrated remarkable capabilities 
	across a wide range of natural language processing tasks \citet{brown2020language,touvron2023open,touvron2019open}. However, the 
	computational and storage costs associated with training and deploying 
	such models at scale pose significant challenges. To reduce these costs, 
	parameter-efficient fine-tuning techniques such as low-rank adaptation 
	(LoRA) \citet{hu2022lora} have emerged as a practical solution. LoRA 
	enables efficient adaptation of pre-trained models by embedding learnable 
	low-rank matrices into specific weight updates, allowing most of the original 
	parameters to remain frozen.
    In particular, the main idea of LoRA is to fine-tune a pretrained 
	model using a rank-$r$ correction matrix $\Delta W$: 
	\begin{equation*}
		W + \Delta W = W + AB^\top, 
		\quad A \in \mathbb{R}^{m \times r}, 
		\quad B \in \mathbb{R}^{n \times r},
	\end{equation*}
	where $W$ remains constant during training and $A, B$ are optimized via 
	gradient-based optimization methods.

Despite its efficiency, the dominant practice of optimizing the LoRA factors $(A,B)$ with Euclidean optimizers such as SGD \citep{robbins1951stochastic}, Adam \citep{kingma2014adam}, 
	Adagrad \citep{duchi2011adaptive}, RMSProp \citep{tieleman2012lecture}, etc. that misaligned with the geometry of the low-rank constraint. The same update $\Delta W$ can be represented by infinitely many factorizations: for any $A\in\mathbb{R}^{m\times r}$, $B\in\mathbb{R}^{n\times r}$ and any invertible matrix $S\in\mathbb{R}^{r\times r}$, we may write:
    \begin{equation} \label{eq:intro_skeleton_parametrization}
        \Delta W = A B^\top = \widetilde A {\widetilde B}^\top, \quad \text{where}\quad \widetilde A  = A S,\quad \widetilde B = B S^{-\top}.
    \end{equation}
Ideally, training should be \emph{reparameterization (transformation) invariant}: the update to $\Delta W$ must not depend on which factorization $(A,B)$ is used \citep{yen2024lora}.  Empirically, this lack of invariance manifests as unbalanced learning where one factor dominates and the other stalls, fragile hyperparameter sensitivity, and path-dependent solutions.
These issues have prompted geometry-aware formulations. Riemannian treatments of low-rank models operate on the \emph{fixed-rank manifold} rather than the ambient factor space, projecting gradients to the tangent space and retracting back to the manifold. Such steps can be implemented efficiently when $r \ll \min\{m,n\}$ and avoid forming full-size matrices. 
Within the LoRA literature, existing Riemannian approaches either use standard SGD-type optimizers \citet[LORO]{moparameter} or rely on the Adam \citep{zhang2024riemannian} optimizer for auxiliary matrices within the chosen parameterization, which deviate them from the Riemannian framework and introduce dependence on parameterization.

In this work, we introduce a \emph{fully Riemannian} framework for training LoRA that optimizes the adapter $X \!=\! \Delta W$ directly on the fixed-rank matrix manifold
\[
\mathcal{M}_r \;=\; \{ X \in \mathbb{R}^{m \times n} : \mathrm{rank}(X)=r \},
\]
eliminating factorization ambiguity by construction. Central to our approach is \emph{Riemannion}, a new Riemannian optimizer on $\mathcal{M}_r$ that \emph{generalizes the recently proposed Muon optimizer~\citep{jordan2024muon}} to the fixed-rank setting. In contrast to prior Riemannian LoRA variants that port Adam-like mechanics to the manifold with ad hoc choices, our design inherits Muon’s geometry-aligned normalization, yielding transformation invariance of the learned update. We further propose a Riemannian gradient–informed initialization that places the initial adapter at a good location on $\mathcal{M}_r$, and we provide a practical, low-overhead implementation that assembles projections, retractions, and vector transports via automatic differentiation, also following best practices from numerical linear algebra. 
Extensive experiments on LLM and diffusion architectures show consistent gains in convergence speed and final task performance over standard LoRA and recent state-of-the-art modifications.

Our contributions are as follows:
\begin{itemize}
    \item \textbf{Riemannion: Muon on the fixed-rank manifold.} We derive \emph{Riemannion}, the first optimizer that \emph{generalizes Muon} to the manifold $\mathcal{M}_r$ of fixed rank matrices. 
    \item \textbf{Riemannian gradient-informed initialization.} We propose an initialization strategy which yields best alignment between the initial Riemannian gradient and the Euclidean gradient. We also propose an efficient way for this strategy by using a randomized SVD algorithm with implicit matrix multiplication (Section~\ref{sec:locally_optimal_initialization}). Finally, we show the connection of this initialization to LoRA-GA.
    \item \textbf{Efficient implementation with automatic differentiation.} We pay special attention to numerical implementation to make the method robust without any prominent overhead compared to vanilla LoRA at small ranks. 
    \item \textbf{Comprehensive empirical validation.} We showcase the performance of our framework for fine-tuning LLMs and in subject-driven
generation using diffusion models. Among positive effects that we observe are: boost in
target metrics, improved convergence, and reduction of variance.
\end{itemize}

\section{Related Work}%
\label{sec:related_work}

	The problem of an optimal initial guess selection for low-rank LLM 
	adaptation has been addressed 
	in a sequence of works: the authors \citet[PiSSA]{meng2024pissa} have suggested 
	a heuristic that involves using a low-rank truncated SVD 
	of pretrained parameters as an initial point 
	for LoRA and its orthogonal complement as frozen layer's parameters, so that 
	the tuning process starts without changing the starting value of the loss function. 
	A similar approach was implemented by \citet[MiLoRA]{wang2024milora}
	with the main difference of optimizing the smallest singular components
	of unadapted parameter matrix.
    A context-aware initialization was considered in~\citep[CorDA]{yang2024corda} and \citep[COALA]{parkina2025coalanumericallystableefficient} proposes a numerically robust inversion-free framework 
    for low-rank weighted approximations for this setting.
	Another idea is 
	to initialize LoRA with a subset of left and right 
	singular vectors of a doubled-rank truncated SVD of the loss function gradient at the starting 
	parameters, proposed by \citet[LoRA-GA]{wang2024lora}.
    We show direct connection of this method to our Riemannian initialization strategy and propose how to additionally significantly accelerate the  computation of SVD using our approach. 
	Attempting to overcome the asymmetry in the initialization of vanilla LoRA fine-tuning 
	process, \citep[LoRA+]{hayou2024lora+} introduced a scale-free step size 
	selection for LoRA factors.

	Riemannian optimization is widely used for  algorithms on
	matrix manifolds and allows for exploiting task geometry or 
	imposing additional constraints. For example, a Riemannian solution for the  
	extreme eigenpairs search problem was described in \citet{absil2009optimization,baker2008riemannian},
	a matrix completion task, which is common in collaborative  
	filtering for recommender systems,  
	via optimization on the fixed-rank manifold~\citep{vandereycken2013low}, a Riemannian approach on 
	the manifold of matrices with orthonormal columns 
	(the Stiefel manifold) was used by \citet{wisdom2016full} for 
	diminishing the problem of vanishing and exploding gradients 
	in recurrent neural networks, etc. The book 
	\citet{trendafilov2021multivariate} also presents a comprehensive 
	description of useful manifolds for solutions of the data science 
	problems. For the deeper understanding of  applied differential geometry techniques
	see the books \cite{absil2009optimization} 
	and \cite{boumal2023introduction}.

	The idea of using Riemannian optimization has recently 
	started to emerge for the large language models. 
    For example, 
	the fine-tuning of LLMs with the help of the Stiefel manifold was considered
	in the work \citet{hu2024retraction}. The authors of \citep{zhang2024riemannian} 
	introduced the Riemannian inspired modification of Adam.
	The authors of \citet[LORO]{moparameter}  applied the Riemannian
	optimization techniques for pretraining LLMs on the fixed-rank 
	manifold. Parametrization that is used in our work can potentially help in this setting  as well, by additionally avoiding potential overheads and instabilities, arising due the explicit inversion of Gram matrices.

\section{Preliminaries}

\subsection{Muon optimizer}
\label{sec:muon}

\emph{Muon} is an optimizer designed specifically for matrix‑valued parameters in a network’s hidden layers. Empirically, it accelerates training on language and vision workloads while leaving scalar/vector parameters and the input/output layers to a conventional optimizer such as AdamW. At a high level, Muon takes the step that stochastic gradient descent with momentum (SGDM) would make on a weight matrix and \emph{orthogonalizes} that update before applying it. Orthogonalization acts as a per‑layer, per‑step preconditioner that equalizes singular values of the update, which mitigates the collapse of updates into a few dominant directions~\citep{jordan2024muon}.
 More specifically, let $W\in\mathbb{R}^{n\times m}$ be a hidden‑layer weight. With gradient $G_t=\nabla_W\mathcal{L}(W_t)$ and momentum $M_t=\beta M_{t-1}+G_t$, Muon computes
\[
\widetilde M_t \;\approx\; \mathrm{Ortho}(M_t)
\quad\text{and}\quad
W_{t+1}=W_t-\eta\,\widetilde M_t,
\]
where $\mathrm{Ortho}(\cdot)$ denotes the nearest semi‑orthogonal matrix in Frobenius norm, i.e.,
\begin{equation} \label{eq:ortho-base}
\mathrm{Ortho}(G)\;=\;\arg\max_{O}\bigl\{\|O-G\|_F:\ O^\top O=I\ \text{or}\ OO^\top=I\bigr\}.
\end{equation}
Computing $\mathrm{Ortho}(G)$ exactly amounts to taking the SVD $G=USV^\top$ and returning $UV^\top$, which is too slow to do at every iteration. Muon instead applies a Newton–Schulz (NS) iteration that—after normalizing $G$ --- implements a composition of a fixed low‑degree polynomial in $GG^\top$ acting on $G$ and converges to $UV^\top$. 
Leveraging efficient matrix multiplication operations results in a highly performant iteration.
We will write $\widetilde M_t = NS(M_t)$ for short.

\paragraph{LMO interpretation.} Muon’s step admits a clean linear minimization oracle (LMO) interpretation~\citep{bernstein2025deriving}. 
Indeed, consider the operator-norm unit ball $\mathcal{B}_{2}=\{X:\|X\|_{2}\le 1\}$. The linear minimization oracle (LMO) over $\mathcal{B}_{2}$ at matrix $M_t$ given by its SVD $M_t=USV^\top$  is
\begin{equation} \label{eq:lmo_base}
UV^\top\in\Argmax_{\|S\|_{2}\le 1}\ \langle M_t,S\rangle.
\end{equation}

\paragraph{Applying Muon to LoRA.} In LoRA, a frozen weight $W_0\in\mathbb{R}^{n\times m}$ is adapted via a low‑rank update $W=W_0+\alpha\,B A$ with $B\in\mathbb{R}^{n\times r}$, $A\in\mathbb{R}^{r\times m}$ and small $r$. Each trainable factor ($A$ and $B$) is a 2D parameter, so Muon can be applied \emph{per factor}:
\[
\begin{split}
&M^{(A)}_t\gets \beta M^{(A)}_{t-1} + \,\nabla_A\mathcal{L},\quad
A_{t+1}\gets A_t-\eta_A\,\mathrm{NS}\bigl(M^{(A)}_t\bigr),\\
&M^{(B)}_t\gets \beta M^{(B)}_{t-1}+\nabla_B\mathcal{L},\quad
B_{t+1}\gets B_t-\eta_B\,\mathrm{NS} \bigl(M^{(B)}_t\bigr).
\end{split}
\]
Note that acting on the two factors separately makes Muon non–reparameterization‑invariant: its per‑factor orthogonalization depends on arbitrary scalings or rotations, skewing the weight‑space step and often letting one factor dominate.

\subsection{Riemannian optimization}
\label{sec:riemann}

Let $\mathcal{M}_r=\{X\in\mathbb{R}^{m\times n}:\operatorname{rank}(X)=r\}\subseteq\mathbb{R}^{m\times n}$
	be a smooth manifold of fixed-rank matrices~\cite[Example 8.14]{lee2003smooth}.
    Let every point $X$ of $\mathcal{M}_r$ be equipped with a \emph{tangent plane} $\mathcal{T}_X\mathcal{M}_r$.
Thinking geometrically, the tangent plane  plays the role of the best local, flat approximation to this curved set: if you “zoom in’’ at $X$, the manifold looks like a plane.
We will now discuss how to numerically parametrize points on a manifold and its tangent plane.
    Every rank-$r$ matrix $X\in \mathcal{M}_r$ can be represented using  
	matrices $A_L \in \mathbb{R}^{m \times r}, 
	B_r\in \mathbb{R}^{n \times r}$ with orthonormal columns and a square matrix $G\in\mathbb{R}^{r\times r}$  as
	\begin{equation} \label{eq:point}
		X = A_L G B_R^\top.
	\end{equation}
    For example, one may think of a thin SVD, in which case $G$ becomes the diagonal matrix of singular values, but other representations are also possible and will be convenient for our purposes.
	Using~(\ref{eq:point}), any tangent vector  $\xi \in \mathcal{T}_X\mathcal{M}_r$ can be represented in a matrix factorization format:
	\begin{equation}\label{eq:skeleton_decomp}
		\xi = \begin{bmatrix}\dot{A} &  A_L \end{bmatrix}
		\begin{bmatrix}  B_R & \dot{B} \end{bmatrix}^\top, \quad \dot{A}^\top A_L = 0,
	\end{equation}
    so that any tangent vector can be identified in terms of the tuple: $(\dot{A}, \dot{B})$.
    Since the factor matrices contain $2r$ columns, we immediately have that $\mathrm{rank}\,\xi \le 2r$.
	Another remarkable fact is that the point $X \in \mathcal{M}_r$ itself lies in the $\mathcal{T}_{X}\mathcal{M}_r$
	with $\dot{A} = 0, \dot{B} = B$.
	Given a matrix $Z \in \mathbb{R}^{m \times n}$, its 
	orthogonal projection $P_{\mathcal{T}_X\mathcal{M}_r}\, Z$ onto 
	the tangent space  $\mathcal{T}_X\mathcal{M}_r$ 
	(with the parameterization given in (\ref{eq:skeleton_decomp})) 
	can be computed as follows:
    \begin{equation}\label{eq:proj_fixed_rank}
        P_{T_X \mathcal{M}_r}(Z)=A_LA_L^\top Z \; + \; \left(I - A_L A_L^\top \right) ZB_RB_R^\top.
    \end{equation}
    and, hence, can be represented in the form of (\ref{eq:skeleton_decomp}) with $\dot{A} = (I - A_L A_L^\top)ZB_R$, and $\dot{B} = Z^\top A_L$.

Let $F \colon 
	\mathbb{R}^{m \times n} \to \mathbb{R}$ be a differentiable function 
	with the Euclidean gradient $\nabla F \in \mathbb{R}^{m \times n}$.
	Within the Riemannian optimization framework, we solve the following task 
     optimization problem:
	\begin{equation*}
		\min_{X \in \mathcal{M}_r} F(X).
	\end{equation*}
    When constructing the algorithms, we need to work with three key objects: Riemannian gradient, retraction and vector transport. 
	The Euclidean gradient is a direction of the steepest local 
	increase~$F$.
	Therefore, it is common to use the Riemannian gradient 
	--- the direction of the steepest
	local increase of corresponding smooth function value along 
	the manifold, which lies in the tangent space \citep[chap.~3.6]{absil2009optimization}. Given 
	the Euclidean gradient $\nabla F$,  one may 
	endow the tangent space $\mathcal{T}_X \mathcal{M}_r$ with a natural scalar product and 
	derive a formula for the direction of the local steepest ascent of $F$ with respect to the manifold. 
	This unique direction is called the Riemannian gradient and can be computed as follows:
	\begin{equation}\label{eq:riem_grad}
		\mathrm{grad}\,F(X) = P_{\mathcal{T}_X \mathcal{M}_r}\,(\nabla F(X)),
		\quad X \in \mathcal{M}_r.
	\end{equation}
A simple and robust retraction that maps a tangent step $\xi$ (for example, $\xi$ is a negative Riemannian gradient) back to the manifold is the truncated SVD:
\begin{equation}\label{eq:retraction}
R_X(\xi)\equiv R(X + \xi) =\operatorname{SVD}_r(X+\xi),
\end{equation}
i.e., the best rank‑$r$ approximation of $X+\xi$ in Frobenius norm.
Note that here we do not need to compute the full SVD and can utilize low-rank structure of $X$ and $\xi$, leading to $\mathcal{O}((m+n)r^2 + r^3)$ operations~\citep{absil2015low}. 
Finally, because tangent spaces change from an optimization step to step, momentum (an accumulated tangent vector) must be moved between them via a \emph{vector transport}. For embedded manifolds like $\mathcal{M}_r$, a standard choice is the \emph{projection transport}
\begin{equation}\label{eq:transport}
\mathcal{T}_{Y\to X}(\xi)\;=\;P_{T_X \mathcal{M}_r}(\xi),\qquad \xi\in\mathcal{T}_Y\mathcal{M}_r,
\end{equation}
which simply reprojects the same ambient matrix $\xi$ onto the new tangent space at $X$.

\section{Riemannion}%
\label{sec:riemannion}

In this section, we focus on the setting of parameter-efficient fine-tuning and, hence, to the fixed-rank manifold. 
For fine-tuning of one layer the optimization problem becomes:
\[
    \mathcal{L}(W + \Delta W) \to \min_{\Delta W\in\mathcal{M}_r}, 
\]
where $\mathcal{L}$ is a differentiable loss function.
Note that optimizing \emph{on} $\mathcal{M}_r$ removes the ambiguity of factorized parameterizations, because all computations are carried out in the intrinsic space of the product $X$ rather than in any particular factorization.
So the formulas we write below will naturally be reparameterization-invariant.
Let $G_t=P_{T_W \mathcal{M}_r}(\nabla \mathcal{L}(W_t))$ be the Riemannian gradient. A Riemannian \emph{heavy‑ball} \citep{polyak1964some} momentum step reads
\begin{align}
&M_t \;=\; \beta\,\widehat M_{t-1} \;+\; G_t, \quad \widehat M_{t-1} \;=\; \mathcal{T}_{W_{t-1}\to W_t}(M_{t-1}),  \label{eq:momentum_accum}\\
&\Delta W_{t+1} \;=\; R\left(\Delta W_t-\eta\,M_t\right), \label{eq:update}
\end{align}
with $M_0=0$, momentum parameter $\beta\in[0,1)$, and stepsize $\eta>0$. 

Let us now discuss how to introduce a Muon-like variant of this iteration, which we refer to as Riemannion.
A direct projection onto the set of orthogonal matrices $\mathrm{Ortho}(M_t)$ presents two challenges.
First, such a step does not respect the underlying Riemannian geometry.
To address this, we propose to find the best approximation of $\mathrm{Ortho}(M_t)$ on the tangent plane $T_W \mathcal{M}_r$.
Such a solution is given via the projection onto the tangent plane $P_{T_W \mathcal{M}_r} (\mathrm{Ortho}(M_t))$. 
However, a second issue arises: although this projection is low-rank, its computation remains inefficient because the input matrix is of full rank.
Let us notice that $M_t\in T_{\Delta W_t} \mathcal{M}_r$ and, hence, is of rank at most $2r$ (Section~\ref{sec:riemann}).
At the same time, the LMO interpretation (Section~\ref{sec:muon}) provides several admissible low-rank solutions, including one where $\mathrm{Ortho}r(\cdot)$ replaces only the first $2r$ singular values with $1$, while all others are set to $0$.
Consequently, we obtain the following update rule:
 \begin{equation}\label{eq:lmo-riem-sol}
\widetilde{M}_t = P_{T_{\Delta W_t} \mathcal{M}_r}(\mathrm{Ortho}_r(M_t)).
\end{equation}
Note that $\mathrm{Ortho}_r(\cdot)$ exactly preserves the column and row spaces of $M_t \in T_{\Delta W_t} \mathcal{M}_r$.  
Although $P_{T_{\Delta W_t} \mathcal{M}_r}(\mathrm{Ortho}_r(M_t))$ does not yield singular values exactly equal to $1$, in practice they remain in a close proximity.  
In particular, in our experiments they were always in the interval $(0.9, 1.1)$.
This behavior is reminiscent of the Newton--Schulz iteration, which likewise produces approximate singular values.  
To eliminate this inexactness and obtain an accurate solution in the intersection of $T_{\Delta W_t}\mathcal{M}_r$ with the set of matrices whose first $2r$ singular values are exactly $1$, one could apply various strategies. For example, one may formally apply the alternating projection method~\citep[Theorem 4, Section 2]{cheney1959proximity,boyd2003}:
\begin{equation}\label{eq:riem-altern-proj}
\widetilde{M}_t = P_{T_{\Delta W_t} \mathcal{M}_r}(\mathrm{Ortho}_r(\dots P_{T_{\Delta W_t} \mathcal{M}_r}(\mathrm{Ortho}_r(M_t)))). 
\end{equation}
As an alternative, we could solve the LMO problem on the tangent plane:
\begin{equation}\label{eq:riem-lmo-tangent}
\widetilde{M}_t  \in \Argmax_{\substack{S\in T_{\Delta W_t}\mathcal{M}_r \\ \|S\|_{2}\le 1}}\ \langle M_t,S\rangle,
\end{equation}
where we proposed to apply observations from~\citep{cesista2025sdnr} to the fixed-rank manifold.
In Appendix~\ref{sec:lmo_bounds}, we show that~(\ref{eq:lmo-riem-sol}) is an approximate solution to the maximization problem~(\ref{eq:riem-lmo-tangent}).
Applying~(\ref{eq:riem-altern-proj}) or accurately solving~(\ref{eq:riem-lmo-tangent}) on the intersection of two convex sets using convex optimization methods is more computationally expensive, and our experiments indicate that it has little to no impact on the overall convergence of the optimizer.

\begin{algorithm}[h!tp]
		\caption{\texttt{OrthoLR} (efficient computation of $\mathrm{Ortho}_r(\xi)$ for $\xi\in T_{X}\mathcal{M}_r$).}
		\label{alg:fr_procrustes}
		\textbf{Require:}
        $\xi \in T_{X}\mathcal{M}_r$ given by $(\dot{A}, \dot{B})$ from~(\ref{eq:skeleton_decomp}); $A_L, B_R$ such that $X=A_LG B_R^\top$ as in~(\ref{eq:point})\\
		\textbf{Ensure:} $A\in\mathbb{R}^{m \times 2r}, B\in\mathbb{R}^{m \times 2r}$: $AB^T = \mathrm{Ortho}(\xi)$.
		\begin{algorithmic}[1]
            \STATE $Q_L, T_L = \mathtt{qr}([A_L, \dot A ])$, $\quad Q_R, T_r = \mathtt{qr}([\dot B, B_R ]^\top)$.
			\hfill\COMMENT{$\mathcal{O}\left((m+n) r^2\right)$}

            \STATE $U_L, \_, V_r^\top = \mathtt{SVD}\left(T_L \, T_R^\top\right)$.
			\hfill\COMMENT{$\mathcal{O}\left(r^3\right)$}

            \STATE $\dot A = Q_L U_L,\quad  \dot B = Q_R V_R$.
			\hfill\COMMENT{$\mathcal{O}\left((m+n) r^2\right)$}

		\end{algorithmic}
	\end{algorithm}

\begin{algorithm}[h!tp]
		\caption{\texttt{ProjectLR} (efficient computation of $P_{T_{X}\mathcal{M}_r}(Z)$ for a rank-$r'$ matrix $Z$).}
		\label{alg:transport}
		\textbf{Require:} 
    		$A \in \mathbb{R}^{m \times r'}, B \in \mathbb{R}^{n\times r'}$ such that $Z = AB^\top$;
			$A_L, B_R$ such that $X=A_LG B_R^\top$ as in~(\ref{eq:point}).\\
		\textbf{Ensure:}
			$\xi = P_{T_{X}\mathcal{M}_r} (Z)$ given by $(\dot{A}, \dot{B})$ from~(\ref{eq:skeleton_decomp}).
		\begin{algorithmic}[1]
			 \STATE $\dot{A} \coloneqq (A - A_R (A_R^\top A))(B^\top B_R), \quad  \dot{B} \coloneqq B(A^\top A_L)$ 
			\hfill\COMMENT{$\mathcal{O}((m + n){r'}^2)$}
		\end{algorithmic}
	\end{algorithm}

Let us finally show that $\widetilde{M}_t$ from~(\ref{eq:lmo-riem-sol}) can be computed efficiently using $\mathcal{O}((m+n)r^2 + r^3)$ arithmetic operations.
Indeed, first of all we need to apply $\mathrm{Ortho}_r(\cdot)$ to a tangent vector $M_t$.
From~(\ref{eq:proj_fixed_rank}), we know that a tangent vector can be represented in a form of a rank-$2r$ matrix.
Such a representation can always be transformed into the compact SVD form with 2 QR decompositions and a single full SVD of a $2r\times 2r$ matrix, see Algorithm~\ref{alg:fr_procrustes} called \texttt{OrthoLR}.
As a next step, we need to project the obtained result (decomposed matrix of rank $2r$) onto the tangent plane.
The operation can also be done efficiently via~(\ref{eq:proj_fixed_rank}) and is summarized in Algorithm~\ref{alg:transport} called \texttt{ProjectLR}.

	\section{Locally optimal initialization (LOI)}%
	\label{sec:locally_optimal_initialization}
	
    Once the theoretical framework for the Riemannian optimizer is established, it is natural to consider an initialization scheme that accounts for the underlying Riemannian geometry.
    Given any $\Delta W \in \mathcal{M}_r$, we may write
	\begin{equation}\label{eq:intro_splitting}
		\mathcal{L}(W) = 
		\mathcal{L}(\underbrace{W - \Delta W}_{W'} + \Delta W) =
		\mathcal{L}\left(  W' + \Delta W\right).
	\end{equation}
    This raises the question: how should $\Delta W$ be chosen to ensure the fastest loss decrease along the manifold?
	The solution is to consider the following optimization task:
	\begin{equation}\label{eq:inital_opt_task}
		\Delta W^{(0)}_{*} \in  
		\Argmax_{\Delta W \in \mathcal{M}_r}
		\left\|P_{\mathcal{T}_{\Delta W} \mathcal{M}_r}\,\nabla_W
		\mathcal{L}(W)\right\|^2_F.
	\end{equation}
	Since $P_{\mathcal{T}_{\Delta W} \mathcal{M}_r}$ is an orthogonal projection matrix to the tangent plane, the task (\ref{eq:inital_opt_task}) essentially  seeks for the point on 
	the fixed-rank manifold, 
	whose tangent space has most alignment with the Euclidean gradient. In other words, this means that the direction of the steepest 
	local function decrease alongside the manifold is aligned 
	with the full model tuning direction.
	The solution to this task is presented in  Theorem~\ref{theorem:optimal_initialization}.
\begin{theorem}\label{theorem:optimal_initialization}
        Let the SVD of $\nabla_W \mathcal{L}(W)$ be:
		\begin{equation*}
			\nabla_W \mathcal{L}(W) = 
			\begin{bmatrix} 
				U_{1, r} & U_{r, 2r} & U_\perp
			\end{bmatrix} 
			\begin{bmatrix} 
				\Sigma_{1,r} & 0 & 0 \\
				0 & \Sigma_{r,2r} & 0 \\
				0 & 0 & \Sigma_{\perp} 
			\end{bmatrix} 
			\begin{bmatrix} 
				V_{1, r} & V_{r, 2r} & V_\perp 
			\end{bmatrix}^\top,
		\end{equation*}
        and let also $\sigma_{2r}\not = \sigma_{2r+1}$.
		Then any optimal solution $\Delta W^{(0)}_{*}$ to the problem~(\ref{eq:inital_opt_task}) has the form:
    \begin{equation}\label{eq:special_init_form}
            \begin{split}
                \Delta W^{(0)}_{*} \in \Bigg\{
                &
                    \begin{bmatrix} 
                        U_{1, r}, U_{r, 2r} \Sigma _{r, 2r}
                    \end{bmatrix} 
                    \begin{bmatrix}
                        S_{11} \\ S_{21}        
                    \end{bmatrix}
                    \begin{bmatrix}
                        C_{21} & C_{22}        
                    \end{bmatrix}
                    \begin{bmatrix}
                        \Sigma_{1, r} V_{1, r}^\top \\
                        V_{r, 2r}
                    \end{bmatrix} \Bigg|\, 
                    \\
                    &
                    S =
                    \begin{bmatrix}
                        S_{11} & S_{12} \\
                        S_{21} & S_{22}
                    \end{bmatrix}
                    \in \mathrm{GL}_{2r}(\mathbb{R}), \  S^{-1} = \begin{bmatrix}
                        C_{11} & C_{12} \\
                        C_{21} & C_{22}
                    \end{bmatrix}
                \Bigg\}.
            \end{split}   
            \end{equation}
            	\end{theorem}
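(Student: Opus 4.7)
The plan is to reformulate~\eqref{eq:inital_opt_task} as a best rank-$2r$ approximation problem for $G := \nabla_W \mathcal{L}(W)$ and then unpack the equality case into the block-inverse form of~\eqref{eq:special_init_form}. Throughout, write $X = \Delta W$ and set $P_{A_L} := A_L A_L^\top$, $P_{B_R} := B_R B_R^\top$.

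First I would establish the identity
\[
\|P_{\mathcal{T}_X \mathcal{M}_r} G\|_F^2 = \|G\|_F^2 - \|(I-P_{A_L}) G (I-P_{B_R})\|_F^2.
\]
This relies on the dual description $\mathcal{T}_X \mathcal{M}_r = \{\xi : (I-P_{A_L})\xi(I-P_{B_R}) = 0\}$, which follows from~\eqref{eq:frm_tangent_space} by plugging in $\xi = \dot{A} B_R^\top + A_L \dot{B}^\top$ (and conversely expressing any $\xi$ satisfying the right-hand condition in that form). In the orthogonal four-block decomposition of $G$ relative to the pairs $(P_{A_L}, I-P_{A_L})$ and $(P_{B_R}, I-P_{B_R})$, the three ``mixed'' blocks lie in $\mathcal{T}_X \mathcal{M}_r$ while $(I-P_{A_L})G(I-P_{B_R})$ is orthogonal to it, giving the displayed formula. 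Hence~\eqref{eq:inital_opt_task} is equivalent to minimizing $\|G - P_{\mathcal{T}_X} G\|_F^2$ over $X \in \mathcal{M}_r$.

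By~\eqref{eq:skeleton_decomp} every element of $\mathcal{T}_X \mathcal{M}_r$ has rank at most $2r$, hence so does $P_{\mathcal{T}_X} G$. Eckart--Young yields $\|G - P_{\mathcal{T}_X} G\|_F^2 \geq \sum_{i > 2r} \sigma_i^2$, and the hypothesis $\sigma_{2r} \neq \sigma_{2r+1}$ makes the best rank-$2r$ approximation unique, namely
\[
G_{2r} := [U_{1,r}, U_{r,2r}]\, \mathrm{diag}(\Sigma_{1,r}, \Sigma_{r,2r})\, [V_{1,r}, V_{r,2r}]^\top.
\]
So the optimizers are exactly the $X \in \mathcal{M}_r$ with $P_{\mathcal{T}_X} G = G_{2r}$, which splits into (a) $G_{2r} \in \mathcal{T}_X$ and (b) $G - G_{2r} \perp \mathcal{T}_X$. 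Since $G - G_{2r} = U_\perp \Sigma_\perp V_\perp^\top$, condition (b) forces $\mathrm{range}(A_L) \subseteq \mathrm{range}([U_{1,r}, U_{r,2r}])$ and $\mathrm{range}(B_R) \subseteq \mathrm{range}([V_{1,r}, V_{r,2r}])$. Writing $A_L = [U_{1,r}, U_{r,2r}]\alpha$ and $B_R = [V_{1,r}, V_{r,2r}]\beta$ for orthonormal $\alpha, \beta \in \mathbb{R}^{2r \times r}$, and setting $D := \mathrm{diag}(\Sigma_{1,r}, \Sigma_{r,2r})$, a direct calculation reduces condition (a) to $(I-\alpha\alpha^\top) D (I-\beta\beta^\top) = 0$, equivalently (since $D$ is invertible) $\mathrm{colspan}(\beta) = D \cdot \mathrm{colspan}(\alpha)^\perp$ in $\mathbb{R}^{2r}$.

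The main obstacle is matching this geometric characterization with the explicit block-inverse form of~\eqref{eq:special_init_form}. I would set $\mathcal{U} := \mathrm{colspan}([S_{11}; S_{21}])$ and identify $\mathrm{colspan}(\alpha)$ with $\mathrm{diag}(I_r, \Sigma_{r,2r})\, \mathcal{U}$, so that $[U_{1,r}, U_{r,2r}\Sigma_{r,2r}]\,[S_{11}; S_{21}]$ spans $\mathrm{range}(A_L)$; the block identity $C_{21} S_{11} + C_{22} S_{21} = 0$ from $CS = I_{2r}$ then shows that the rows of $[C_{21}, C_{22}]$ span $\mathcal{U}^\perp$, and right-multiplying by $[\Sigma_{1,r} V_{1,r}^\top; V_{r,2r}^\top]$ produces precisely the required row space $D \cdot \mathcal{U}^\perp$ for $B_R$, verifying (a). As the remaining blocks of $S$ vary subject to invertibility, the rank-$r$ middle product $[S_{11}; S_{21}][C_{21}, C_{22}]$ sweeps out every rank-$r$ matrix with column space $\mathcal{U}$ and row space $\mathcal{U}^\perp$, capturing the full inner $r \times r$ invertible freedom in $\Delta W = A_L M B_R^\top$ and so exhausting the optimum set.
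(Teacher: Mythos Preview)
Your overall strategy is the paper's: rewrite \eqref{eq:inital_opt_task} as minimizing $\|G - P_{\mathcal{T}_X}G\|_F$, bound below by Eckart--Young using $\mathrm{rank}(P_{\mathcal{T}_X}G) \le 2r$, and identify the equality case. You push further than the paper on the converse direction---the paper merely exhibits one optimal $\Delta W_*^{(0)}=\alpha U_{1,r}V_{r,2r}^\top$, checks that the bound is attained, and then reparametrizes the optimal tangent vector $\xi_* = G_{2r}$ by an arbitrary $S \in \mathrm{GL}_{2r}$ to produce the family~\eqref{eq:special_init_form}, without arguing that every optimum arises this way. You actually characterize the column and row spaces of all optima and match them to the block-inverse parametrization, which is the sharper statement.

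There is one slip: condition (b) alone does \emph{not} force $\mathrm{range}(A_L) \subseteq \mathrm{range}([U_{1,r},U_{r,2r}])$ when $G - G_{2r}$ fails to have full rank (in particular when $\mathrm{rank}\,G = 2r$, so $G - G_{2r} = 0$ and (b) is vacuous). The range inclusion comes instead from condition (a): since $\sigma_{2r} > \sigma_{2r+1} \ge 0$ the matrix $G_{2r}$ has rank exactly $2r$, and writing $G_{2r} = \dot A B_R^\top + A_L \dot B^\top$ shows its $2r$-dimensional column space lies inside the at-most-$2r$-dimensional $\mathrm{range}([\dot A, A_L])$; equality of dimensions then gives $\mathrm{range}(A_L) \subseteq \mathrm{range}([\dot A, A_L]) = \mathrm{range}([U_{1,r},U_{r,2r}])$, and symmetrically for $B_R$. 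With this fix the rest of your argument (the reduction to $(I-\alpha\alpha^\top)D(I-\beta\beta^\top)=0$ and the block-inverse bookkeeping) goes through and is in fact more complete than the paper's own proof.
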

	\begin{proof}
	See Appendix \ref{sec:inital_point_search}.
	\end{proof}

    In our experiments, we use $S = \begin{bmatrix}
                        \alpha I_r & 0 \\
                        0 & I_r
                    \end{bmatrix} $, obtaining:
        $\Delta W^{(0)}_{*} = \alpha U_{1,r}V^\top_{r,2r} 
			\in \mathcal{M}_r,  \alpha \in \mathbb{R}\setminus \{0\}$.

    Interestingly, Theorem~\ref{theorem:optimal_initialization} relates to the findings of \citet{wang2024lora}, 
    although their analysis neither adopts a Riemannian framework nor addresses parametrization-free optimization. 
    Our optimizer further differs from \citet{zhang2024riemannian} and \citet[LORO]{moparameter} in that it avoids inversion of the Gram matrix. 
    As a result, the method remains stable as $\|\Delta W^{(0)}_\ast\|\to 0$. 
    Empirical results indicate that initializing $\Delta W^{(0)}_\ast$ with a small norm leads to improved performance. 
    The procedure for selecting the scaling parameter $\alpha$ is described in Appendix~\ref{sec:hyperparameter}.

\section{Single backward-pass gradient trick}\label{sec:one-step gradient trick}
    This section introduces an efficient method for computing gradient-times–matrix in a matrix-free way, at a computational cost equivalent to a \emph{single backward pass}. We then apply this technique in both the LOI initialization procedure in Algorithm~\ref{alg:bpr_svd} and the Riemannion optimizer in Algorithm~\ref{alg:fr_riemannion}.
    
    The calculation of the full fine-tuning loss gradient 
	$\nabla_{W} \mathcal{L}\left( W \right)$ with pretrained parameters $W \in \mathbb{R}^{m \times n}$ is 
	computationally expensive. 
    At the same time, for our framework we only need to compute the products 
	$(\nabla_{W} \mathcal{L}\left( W\right)^\top M)$ or $(\nabla_{W} \mathcal{L}\left( W \right) N)$ for some $M \in \mathbb{R}^{m \times r}, N \in \mathbb{R}^{n \times r}$.
    Using the trick from~\citep{novikov2022automatic}, we may calculate both quantities simultaneously using a \emph{single forward-backward pass} with a doubled rank representation. 

    First, we initialize differentiable parameters $Z_1 = 0 \in \mathbb{R}^{m \times r}$ and $Z_2 = 0 \in \mathbb{R}^{n \times r}$ and perform a simple forward pass $L := \mathcal{L}\left( W + Z_1N^\top +  M Z_2^\top\right)$.
	This step does not violate the pipelines of LoRA framework since it is equivalent to a standard LoRA forward pass with special adapter:
    \begin{equation*}
		Z_1N^\top + 
			M Z_2^\top = \begin{bmatrix}
			    Z_1 & M
			\end{bmatrix}\begin{bmatrix}
			    N & Z_2
			\end{bmatrix}^\top = \tilde{A}\tilde{B}^\top, \quad \tilde{A} \in \mathbb{R}^{m \times 2r}, \tilde{B} \in \mathbb{R}^{n \times 2r}.
	\end{equation*}
    Then we invoke an autodiff algorithm for value $L$, which ensures us both:
	\begin{equation}\label{eq:general_mm_grad_bp_trick}
		\begin{split}
			&\nabla_{Z_1} L = \nabla_{Z_1}
			\mathcal{L}\left( W + Z_1N^\top + 
			M Z_2^\top\right)|_{Z_1=0, Z_2=0} 
			= \nabla_W \mathcal{L}(W)N, \\
			&\nabla_{Z_2} L =\nabla_{Z_2} 
			\mathcal{L}\left( W + Z_1N^\top + 
			MZ_2^\top\right)|_{Z_1=0, Z_2=0} 
			= \nabla_W \mathcal{L}(W)^\top M.
		\end{split}
	\end{equation}
    Note, that if $\Delta W = A_L B^\top = AB_R^\top$ from $W + \Delta W$, then one may take $N = B_R, M = A_L, Z_1 = 0, Z_ 2 = B, Y = W  + \Delta W$ and the exact same operations work.

    Notably, (\ref{eq:general_mm_grad_bp_trick}) is crucial for computation of the Riemannian gradient in (\ref{eq:proj_fixed_rank}). Therefore the proposed approach is a key building block that allows us to avoid forming the full fine-tune gradient loss and effectively compute matrix-matrix multiplications. This idea is the core for both: LOI initialization (Algorithm~\ref{alg:bpr_svd}) and Riemannion optimizer (Algorithm~\ref{alg:fr_riemannion}).
    
    \paragraph{Randomized SVD for efficient initialization} 
    The computation of the $2r$-truncated SVD of the full loss gradient, as required by Theorem~\ref{theorem:optimal_initialization}, has asymptotic complexity $\mathcal{O}(\min\{m, n\}mn)$, which may be infeasible for large-scale models. In addition, explicitly forming this gradient is itself computationally expensive, and thus we need to avoid it in practice. To overcome this problem, 
	we propose to use a randomized SVD with power iterations (see \citep{halko2009finding}), which we also enhance with our \emph{one-step gradient trick} as is 
	described in the Algorithm \ref{alg:bpr_svd}.
    In a nutshell, we need to compute $\left(\nabla_W \mathcal{L}(W) \nabla_W \mathcal{L}(W)^\top\right)^q Y$, where $Y  = \nabla_W \mathcal{L}(W) \Omega$ and $\Omega$ is sampled from 
	standard normal distribution.
    This iteration can be done in a robust manner using QR decompositions.
    The steps $2, 4, 5, 6$ correspond to a trick from~(\ref{eq:general_mm_grad_bp_trick}).  
    
	Overall, the LOI search procedure has asymptotic complexity $\mathcal{O}((m+n)r^2)$, plus $2(q+1)$ additional backward passes. Moreover, since LOI search is executed only once before fine-tuning (which typically involves thousands of backward passes), its runtime overhead is negligible, taking  merely $0.25\%$ of the total fine-tuning wall-clock time in our experiments.
    Section~\ref{sub:commonsense_reasoning_finetuning}, about $2$ thousand optimization steps were carried out and Algorithm~\ref{alg:bpr_svd} accounted for merely $0.25\%$ of the total fine-tuning wall-clock time.
	
	\begin{algorithm}[h!tp]
		\caption{\texttt{BackPropRSVD}}
		\label{alg:bpr_svd}
		\textbf{Require:}
			Weights $W \in \mathbb{R}^{m \times n}$,
			  rank $r \in \mathbb{N}$,
			oversampling parameter $p$,
			power-step parameter $q$.\\
		\textbf{Ensure:} 
            Randomized $r$-truncated SVD $(U_{r}, \Sigma_{r}, V_{r})$ of $\nabla_W \mathcal{L}(W)$.

		\begin{algorithmic}[1]
			\STATE Choose $k = r + p$, Sample $\Omega \in \mathbb{R}^{n \times k} 
			\sim \mathcal{N}(0, 1)$. 
			\hfill\COMMENT{$\mathcal{O}\left(nr\right)$}

			\STATE  $Y := 
			\mathtt{qr}(\nabla_A \,\mathcal{L} (W + A\Omega^\top)|_{A = 0}).\mathtt{Q}.$
            \hfill\COMMENT{$1$ backward pass $+\mathcal{O}\left(mr^2\right)$}

			\FOR {$i := 1, \dots, q$}

				\STATE  $Y := 
				\mathtt{qr}([\nabla_B \,\mathcal{L} (W + YB^\top)|_{B = 0}]^\top).\mathtt{Q}.$ 
                \hfill\COMMENT{$1$ backward pass $+\mathcal{O}\left(nr^2\right)$}

				\STATE  $Y := 
				\mathtt{qr}(\nabla_A \,\mathcal{L} (W + AY^\top)|_{A = 0}).\mathtt{Q}.$ 
                \hfill\COMMENT{$1$ backward pass$+\mathcal{O}\left(mr^2\right)$}

			\ENDFOR

			\STATE  $Y := 
			[\nabla_B \,\mathcal{L} (W + YB^\top)|_{B = 0}]^\top.$ 
            \hfill\COMMENT{$1$ backward pass}
			\STATE  $U, \Sigma, V^\top := 
			\mathtt{truncSVD}\left(Y, r\right).$
			\hfill\COMMENT{$\mathcal{O}\left(nr^2\right)$}
			\STATE $YU, \Sigma, V^\top$ 
			\hfill\COMMENT{$\mathcal{O}\left(mr^2\right)$}
		\end{algorithmic}
	\end{algorithm}

    \paragraph{Efficient Riemannion implementation}
    The  Riemannion optimizer is presented in Algorithm~\ref{alg:fr_riemannion}. Similar to vanilla Muon, it relies on a chosen \texttt{Ortho} procedure. Since the LoRA approach represents the fine-tuning shift $\Delta W$ with low-rank matrices, we employ an SVD-based \texttt{Ortho} procedure. For computational efficiency, this procedure is adapted to the fixed-rank manifold, as described in Section~\ref{sec:riemannion} and implemented in the \texttt{OrthoLR}  (Algorithm~\ref{alg:fr_procrustes}). 
	In detail, in step 1 we  calculate the Riemannian gradient components via  a \emph{single backward call}~(\ref{eq:general_mm_grad_bp_trick}). Then, in step $2$ the algorithm transports the Heavy-Ball tangent
	direction to the current point via Algorithm \ref{alg:transport} that provides a simple but effective implementation of~(\ref{eq:proj_fixed_rank}). In line $4$, we compute
	the final optimization direction on the tangent space of the given point 
	with a Heavy-Ball momentum coefficient $\beta$. In line $5$ algorithm performs the retraction step.

	The algorithm finalizes with saving the obtained minimization direction 
	for momentum in line $6$ and calculating of a new point representation in step $7$.

    \begin{algorithm}[h!tp]
		\caption{\texttt{One step of Riemannion}}
		\label{alg:fr_riemannion}
		\textbf{Require:}
			Weight matrix $W' \in \mathbb{R}^{m \times n}$, rank $r \in \mathbb{N}$,
            initial point $A_L, B_R$,
            Heavy-ball momentum $A_{\mathrm{HB}}, B_{\mathrm{HB}}$,
			step size $\eta$,
			momentum coefficient $\beta$,
            weight decay coefficient $\gamma$.\\
		\textbf{Ensure:}
			Tuning parameters $\Delta W^* \in \mathcal{M}_r$.
		\begin{algorithmic}[1]

				\STATE $\dot{A}, \dot{B} := \nabla_{Z_1} 
							\mathcal{L}\left( W' + Z_1B_R^\top + 
							A_L Z_2^\top\right)|_{Z_1=0, Z_2=B},$ \\
                            \hspace{1.27cm}$ \nabla_{Z_2} 
							\mathcal{L}\left(W'+ Z_1B_R^\top + 
							A_L Z_2^\top\right)|_{Z_1=0, Z_2=B}.$
                \hfill\COMMENT{1 backward pass}

				\STATE $\dot{A}_{\mathrm{prev}}, \dot{B}_{\mathrm{prev}} := 
				\mathtt{ProjectLR}\left((A_{\mathrm{M}}, B_{\mathrm{M}}), A_L, B_R\right)$. 
				\hfill\COMMENT{$\mathcal{O}\left((m + n)r^2\right)$}

				\STATE $\dot{A}, \dot{B} := \beta \dot{A}_{\mathrm{prev}} + 
				(I - A_L A_L^\top) \dot{A}, \;
                \beta \dot{B}_{\mathrm{prev}} + 
				\dot{B}$ 
				\hfill\COMMENT{$\mathcal{O}\left((n+m)r^2\right)$}

                \STATE $\dot A, \dot B := \mathtt{ProjectLR}(\mathtt{OrthoLR} (A_L, B_R, \dot A, \dot B, r, ), A_L, B_R)$
                \hfill \COMMENT{$\mathcal{O}\left((m + n)r^2 +r^3\right)$}

				\STATE 
				$
					U, \Sigma, V^\top := 
					\mathtt{RetractionLR}( [-\eta\dot{A},  A_L],
					[  B_R, -\eta(\dot{B} + \gamma B_R) ] ) 
				$ \hfill \COMMENT{$\mathcal{O}\left((m + n)r^2 +r^3\right)$}

				\STATE $A_{\mathrm{HB}}, B_{\mathrm{HB}} := [\dot{A}, A_L], \; [B_R, \dot{B}]$ 

				\STATE $A_L, B := U, \Sigma V^\top$ 
				\hfill\COMMENT{$\mathcal{O}\left(nr^2\right)$}
		\end{algorithmic}
	\end{algorithm}

    Overall, one iteration of the Riemannion loop has asymptotic complexity $\mathcal{O}((m+n)r^2 + r^3)$ and additionally the same number of backward passes as vanilla LoRA. For comparison, the Euclidean Muon optimizer for LoRA (Section~\ref{sec:muon}) exhibits the same asymptotic complexity. The complete framework and its time performance are summarized in  Algorithm~\ref{alg:full_fr_riemannion} and Appendix~\ref{sec:overhead}.

\section{Experiments}%
\label{sec:experiments}
In this section, we also employ the term LOI (Locally Optimal Initialization), referring to the proposed initialization scheme described in Section~\ref{sec:locally_optimal_initialization}. 
This is done to distinguish between using  RiemannLoRA as an optimizer with zero initialization (as in basic LoRA) and RiemannLoRA-LOI, which is the proposed combination of initialization and optimization. We conduct a series of LLM fine-tuning experiments for different tasks. All of the experiments were computed on  NVIDIA V100-32Gb GPU and A100-80Gb GPU. We ran all the experiments within $\sim 2000$ GPU hours.
\subsection{Commonsense reasoning fine-tuning}%
\label{sub:commonsense_reasoning_finetuning}

The results were obtained for the benchmark (\citet[BoolQ]{clark2019boolq},
\citet[PIQA]{bisk2020piqa}, \citet[SIQA]{sap2019socialiqa}, 
\citet[hellaswag]{zellers2019hellaswag}, 
\citet[winogrande]{sakaguchi2021winogrande},\citet[ARC]{clark2018arc},
\citet[OBQA]{OpenBookQA2018})
common reasoning. The structure of the dataset is described in  
Appendix~\ref{sec:dataset}.
In the following experiments
we conduct a fine-tuning procedures for multilayer perceptron (MLP) and attention layers of
Llama 3 8b model (\citet{grattafiori2024llama}).

The commonsense reasoning tasks comprise of $8$ sub-tasks, each of them 
contains a predefined training and a testing set. 
We follow the setting of \citet{hu2023llm}
and amalgamate the training datasets from all $8$ tasks to create the final 
training dataset and conduct evaluations on the individual testing dataset
for each task.

The hyperparameter tuning protocol and the selected hyperparameters are provided in \ref{sec:hyperparameter}. Table \ref{table:cr_adam_8b_comp} contains the accuracy 
of the trained model's responses on the test dataset. 
Within the LoRA framework, the proposed Riemannian fine-tuning method delivers clear performance gains. The Riemannion optimizer consistently outperforms LoRA, DoRA, and achieves superior metric results compared to the standard Muon optimizer applied to LoRA factors (see Section~\ref{sec:muon}). Furthermore, relative to other Riemannian-geometry–aware approaches such as RPrecAdamW \citep{zhang2024riemannian}, our method also demonstrates better results. Finally, the variance of outcomes for the proposed method is the smallest among all compared approaches.

\begin{table}[h!tp]
    \centering
    \caption{The average accuracy (in \%) among $8$ tasks of fine-tuned Llama 3-8b   
    using different approaches, tested on 
    Commonsense Reasoning benchmark. LoRA rank is set to 16. }
	\scriptsize
    \setlength{\tabcolsep}{3pt}
    \begin{tabular}{lrrrrrrrrr}
        \toprule
        Task & BoolQ & PIQA & SIQA & hella-& wino-& ARC-E& ARC-C& OBQA & All \\
        Initialization &  &  &  &  swag &  grande &  &  & &  \\
        \midrule
        Raw & $65.0$ & $76.6$ & $73.0$ & $66.1$ & $61.3$ & $92.5$ & $82.3$ & $79.6$ & $74.5$ \\
        Adam & $74.8_{\pm 1.9}$ & $89.8_{\pm 0.9}$ & $82.6_{\pm 0.6}$ & $96.2_{\pm 0.3}$ & $87.9_{\pm 1.2}$ & $92.4_{\pm 0.7}$ & $84.9_{\pm 0.7}$ & $88.5_{\pm 0.4}$ & $87.1_{\pm 0.6}$ \\
        DoRA & $74.8_{\pm 0.8}$ & $89.4_{\pm 0.5}$ & $82.4_{\pm 0.7}$ & $95.9_{\pm 0.1}$ & $87.8_{\pm 0.4}$ & $90.7_{\pm 1.2}$ & $83.8_{\pm 0.7}$ & $87.8_{\pm 0.6}$ & $86.6_{\pm 0.3}$ \\
        Muon & $72.9_{\pm 0.0}$ & $86.4_{\pm 0.5}$ & $80.8_{\pm 0.2}$ & $94.1_{\pm 0.2}$ & $84.4_{\pm 0.0}$ & $84.2_{\pm 0.9}$ & $77.3_{\pm 2.5}$ & $83.9_{\pm 1.1}$ & $83.0_{\pm 0.6}$ \\
        DoneRITE & $72.2_{\pm 0.3}$ & $88.6_{\pm 0.1}$ & $82.0_{\pm 0.6}$ & $95.1_{\pm 0.1}$ & $85.6_{\pm 0.2}$ & $87.7_{\pm 1.5}$ & $79.3_{\pm 2.6}$ & $85.7_{\pm 0.5}$ & $84.5_{\pm 0.5}$ \\
        RPrecAdamW & $\mathbf{75.8}_{\pm 0.4}$ & $89.5_{\pm 0.4}$ & $82.4_{\pm 0.2}$ & $96.1_{\pm 0.2}$ & $87.7_{\pm 0.9}$ & $90.6_{\pm 1.6}$ & $84.1_{\pm 1.1}$ & $87.7_{\pm 0.5}$ & $86.8_{\pm 0.4}$ \\

        Riemannion & $75.7_{\pm 0.7}$ & $\mathbf{91.2_{\pm 0.2}}$ & $\mathbf{83.5_{\pm 0.6}}$ & $\mathbf{96.7}_{\pm 0.0}$ & $\mathbf{88.6}_{\pm 0.4}$ & $\mathbf{93.6}_{\pm 0.3}$ & $\mathbf{86.4_{\pm 0.4}}$ & $\mathbf{89.3}_{\pm 0.8}$ & $\mathbf{88.1}_{\pm 0.2}$ \\
        \bottomrule
    \end{tabular}
    \label{table:cr_adam_8b_comp}
\end{table}

\subsection{Subject-driven generation}%
\label{sub:subject_driven_generation}

Subject-driven generation \citep{DB, TI} is a task in which the user provides several reference photos of an object, called a concept, and uses a diffusion model to generate this concept with certain conditions (e.g., a textual prompt).  One way to solve this task is to fine-tune a pre-trained diffusion model using this small set of reference images. However, this technique leads to a degradation in understanding of the conditions and to a fast overfitting of the concept. Furthermore, it requires a high computational cost due to the large number of trainable parameters. This is why previous works, such as \citep{qiu2023controlling, liu2024parameterefficient, hu2022lora, r1e, svdiff, gorbunov2024groupshuffleefficientstructured}, propose training only a lightweight parameterization for the base model. In this section, we demonstrate the performance of our parameterization in this task.

In our experiments, we used Stable Diffusion 2 \citep{Rombach_2022_CVPR} as the base model. We choose LoRA \citet{hu2022lora} as a baseline and train both models with ranks of $4$, $8$ and $16$. 
\begin{wrapfigure}{r}{0.4\textwidth}
    \centering
    \includegraphics[width=\linewidth]{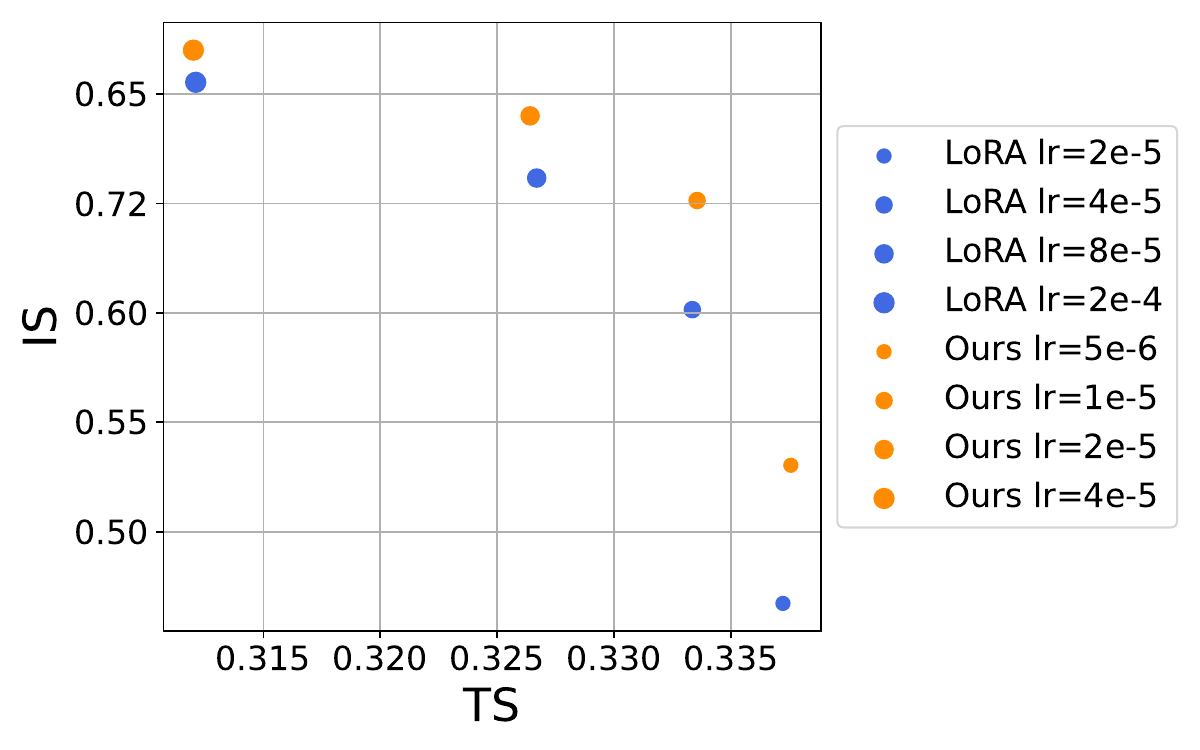}
    \caption{Comparison of text and image similarities for LoRA and our method with rank 4 at different learning rates on 400 step.}
    \label{fig:sdg_metr}
\end{wrapfigure}
We predict the parameterization of the \texttt{q}, \texttt{k}, \texttt{v}, and \texttt{out.0} matrices in all attention layers. The Dreambooth dataset  \citep{DB} was used in all our experiments. LoRA was trained using the Adam optimizer.
In Figure  \ref{fig:sdg_visual}, we present a visual comparison of LoRA with different ranks. As can be seen, even for complex concepts such as 'robot toy', our method requires only 600 steps to learn the concept while preserving appropriate text similarity. We found that for subject-driven generation tasks, the lower the rank, the faster our method converges. To evaluate this, we also calculated metrics for different learning rates on a subset of the Dreambooth dataset. We use CLIP to measure text similarity and DINO to measure image similarity. Figure \ref{fig:sdg_metr} shows that, even with different learning rates, our method achieves more accurate results in concept preservation. Further details and a visual comparison can be found in the Appendix \ref{sec:sdg}.

\begin{figure}[t]
    \centering
    \includegraphics[width=0.9\linewidth]{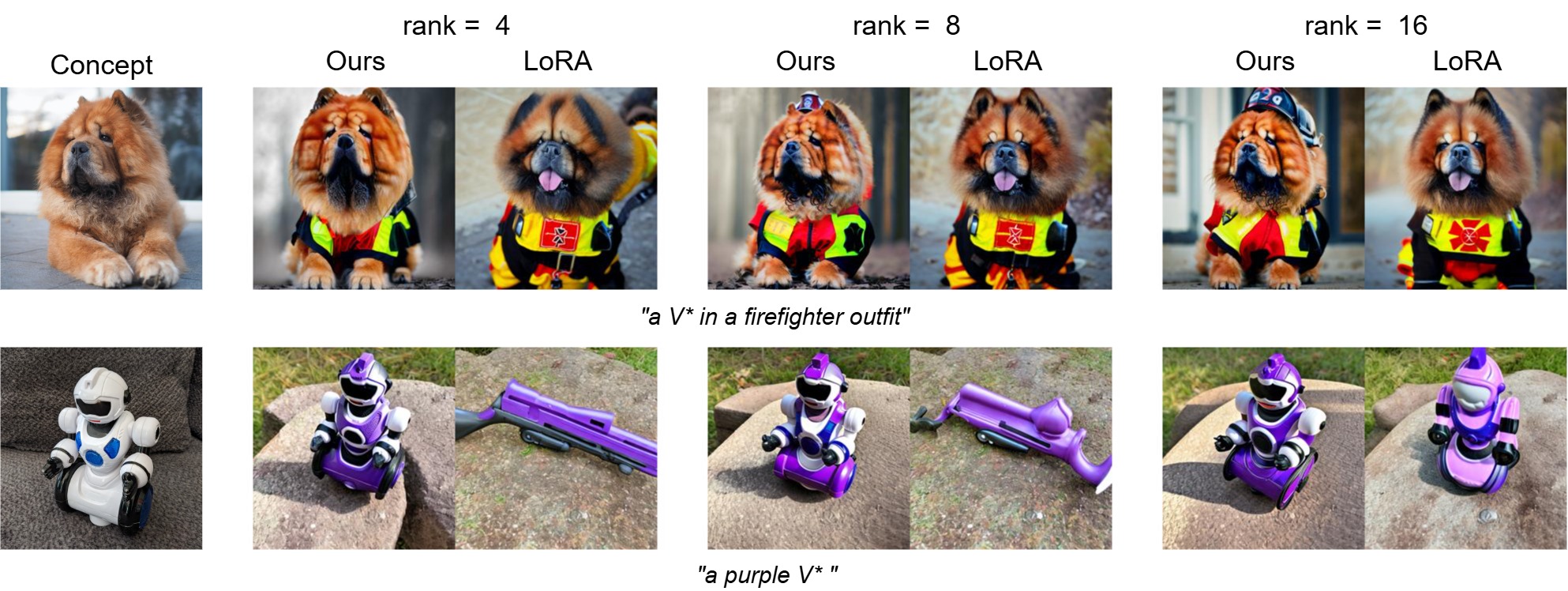}
    \caption{Visual results for Subject-driven generation on 600 training step.}
    \label{fig:sdg_visual}
\end{figure}

\section{Conclusion}%
\label{sec:conclusion}
	In this work, we propose a novel fully Riemannian framework that integrates a new muon-based optimization method, locally-optimal initialization, and an efficient implementation. This integrated approach yields a reliable reparametrization-invariant method that outperforms competing approaches on fine-tuning large language models (LLMs) and exhibits additional favorable properties for low-rank approximations in diffusion models.
Given the promising empirical results, a natural direction for future research is to investigate the theoretical properties of the proposed method.

\section{Reproducibility Statement}
The hyperparameter selection procedure is described in Appendix \ref{sec:hyperparameter}.
The datasets used in the experiments and the corresponding preprocessing steps are detailed in Appendix \ref{sec:dataset}.
The proof of the stated assumptions is provided in Appendix \ref{sec:inital_point_search}. The hyperparameters for Subject-driven generation task are provided in the Appendix \ref{sec:sdg}.

\bibliography{iclr2026_conference}
\bibliographystyle{iclr2026_conference}

\newpage
\appendix

\section{Inital point search}%
\label{sec:inital_point_search}
In this section we introduce the proof of Theorem \ref{theorem:optimal_initialization}
	
	\begin{proof}
		In order to represent the optimization task (\ref{eq:inital_opt_task}) in a more simple way, 
		we will use a slightly different formula for the projection of the full loss gradient onto the tangent space of 
		the fixed-rank manifold. Let 
		\begin{equation*}
			\Delta W = A_L B^\top = A B_R^\top \in \mathcal{M}_r,
		\end{equation*} 
		and the tangent space is parametrized as follows
		\begin{equation*}
			\mathcal{T}_{\Delta W} \mathcal{M}_r = 
			\{ \dot A B_R^\top + A_L \dot B^\top \mid \dot B \in \mathbb{R}^{n \times r}, \dot A \in \mathbb{R}^{m \times r}, A_L^\top \dot A = 0 \},
		\end{equation*}
		
		then one may derive an orthogonal projection formula for any matrix $Z$ (see, e.g. \citep[eq.~7.53]{boumal2023introduction})
		\begin{equation*}
			P_{\mathcal{T}_{\Delta W} \mathcal{M}_r}Z = Z - (I - A_L A_L^\top)Z(I - B_R B_R^\top) \in \mathcal{T}_{\Delta W} \mathcal{M}_r.
		\end{equation*}
		As the tangent space is a linear space, the operation of orthogonal projection 
		can be written as an optimization task
		\begin{equation}\label{eq:apndx_proj_as_opt}
			P_{\mathcal{T}_{\Delta W} \mathcal{M}_r}Z = \argmin_{\xi \in \mathcal{T}_{\Delta W} \mathcal{M}_r} \|Z - \xi \|^2_F.
		\end{equation}
		Since

        \begin{equation*}
            \begin{aligned}
    			\|\nabla \mathcal{L}\|^2_F & = 
                \| \left(\nabla \mathcal{L} - P_{\mathcal{T}_{\Delta W} \mathcal{M}_r}\nabla \mathcal{L} \right) + P_{\mathcal{T}_{\Delta W} \mathcal{M}_r}\nabla \mathcal{L} \|^2_F  = \\
    			& = \|\nabla \mathcal{L} - P_{\mathcal{T}_{\Delta W} \mathcal{M}_r}\nabla \mathcal{L}\|^2_F + 
    			\|P_{\mathcal{T}_{\Delta W} \mathcal{M}_r}\nabla \mathcal{L}\|_F^2 + 2 \langle \nabla \mathcal{L} - P_{\mathcal{T}_{\Delta W} \mathcal{M}_r}\nabla \mathcal{L}, P_{\mathcal{T}_{\Delta W} \mathcal{M}_r} \nabla \mathcal{L} \rangle \\
                & = \|\nabla \mathcal{L} - P_{\mathcal{T}_{\Delta W} \mathcal{M}_r}\nabla \mathcal{L}\|^2_F + 
    			\|P_{\mathcal{T}_{\Delta W} \mathcal{M}_r}\nabla \mathcal{L}\|_F^2 + 2 \langle P_{\mathcal{T}_{\Delta W} \mathcal{M}_r} \left( \nabla \mathcal{L} - P_{\mathcal{T}_{\Delta W} \mathcal{M}_r}\nabla \mathcal{L} \right), \nabla \mathcal{L} \rangle \\
                &= \|\nabla \mathcal{L} - P_{\mathcal{T}_{\Delta W} \mathcal{M}_r}\nabla \mathcal{L}\|^2_F + 
    			\|P_{\mathcal{T}_{\Delta W} \mathcal{M}_r}\nabla \mathcal{L}\|_F^2 + 2 \langle \underbrace{P_{\mathcal{T}_{\Delta W} \mathcal{M}_r}\nabla \mathcal{L} - P_{\mathcal{T}_{\Delta W} \mathcal{M}_r}\nabla \mathcal{L}}_{=0}, \nabla \mathcal{L} \rangle \\
                &= \|\nabla \mathcal{L} - P_{\mathcal{T}_{\Delta W} \mathcal{M}_r}\nabla \mathcal{L}\|^2_F + 
    			\|P_{\mathcal{T}_{\Delta W} \mathcal{M}_r}\nabla \mathcal{L}\|_F^2
            \end{aligned}
		\end{equation*}
        
		so the optimization task (\ref{eq:inital_opt_task}) is equivalent to 
		\begin{equation*}
			\|\nabla \mathcal{L} - P_{\mathcal{T}_{\Delta W} \mathcal{M}_r}\nabla \mathcal{L}\|_F^2 \to \min_{\Delta W},
		\end{equation*}
		which, in turn, using (\ref{eq:apndx_proj_as_opt}) is equivalent to the task
		\begin{equation*}
			\min_{{\Delta W}}\min_{\xi \in \mathcal{T}_{\Delta W} \mathcal{M}_r}\|\nabla \mathcal{L} - \xi\|^2_F .
		\end{equation*}

		Due to the fact that every vector of the tangent space is an element of a set of 
		all matrices with $2r$-bounded rank $\mathcal{M}_{ \le 2r}$, one may use 
		the Eckart-Young-Mirsky theorem (see \citep{eckart1936approximation}) to obtain the following lower bound: 
	\begin{equation}\label{eq:apndx_lower_bound}
				 \|\nabla \mathcal{L} - P_{\mathcal{T}_{\Delta W} \mathcal{M}_r}\nabla \mathcal{L}\|_F^2 
				\ge \| \nabla \mathcal{L} - \mathrm{truncSVD}\, (\nabla \mathcal{L}, 2r)\|_F, 
				\forall \; \Delta W \in \mathcal{M}_r, 
				\xi \in \mathcal{T}_{\Delta W} \mathcal{M}_r.
		\end{equation}
		But it is possible to ensure $\Delta W^{(0)}_* \in \mathcal{M}_r$ and 
		$\xi_* \in \mathcal{T}_{\Delta W^{(0)}_*} \mathcal{M}_r$ which turn the inequality (\ref{eq:apndx_lower_bound}) into 
		the equality. One may take 
		\begin{equation*}
			\begin{split}
				&\Delta W^{(0)}_* =  A_L (\alpha B)^\top = \alpha U_{1,r}V_{r,2r}^\top = (\alpha A) B_R^\top, \quad \alpha \in \mathbb{R},\\
				&\xi_* = \dot{A}B_R^\top + A_L \dot{B}^\top =
				\left(U_{r,2r}\Sigma_{r,2r}\right)B_R^\top + 
				A_L\left(\Sigma_{1,r}V_{1,r}\right)^\top = \mathrm{truncSVD}\,(\nabla \mathcal{L}, 2r).
			\end{split}
		\end{equation*}
		Note, that $\dot{A}^\top A_L = 0$. So the initialization $\Delta W^{(0)}_*$ ensures that $\mathrm{truncSVD}\,(\nabla \mathcal{L}, 2r)$ lies in the 
		tangent space $\mathcal{T}_{\Delta W^{(0)}_*} \mathcal{M}_r$, which means that 
		the first step of the RiemannLoRA Algorithm \ref{alg:fr_lora_hb} will get the Riemannian gradient
		equal to $\mathrm{truncSVD}\,(\nabla \mathcal{L}, 2r)$.

        To generalize the result, we should change the parametrization of the tangent space. Because of the fact that each of the tangent vectors $\xi$ lies in $\mathcal{M}_{\leq 2r}$,
        we may use an unconstrained skeleton decomposition (without constraints for
        $\dot{A}$ and $A_L$):
        \begin{equation*}
            \begin{split}
                & \xi_* = \begin{bmatrix} A_*, \dot A_* \end{bmatrix} \begin{bmatrix} \dot B_*^\top \\ B_*^\top \end{bmatrix} = \begin{bmatrix} A_*, \dot A_* \end{bmatrix} \, I_{2r} \, \begin{bmatrix} \dot B_*^\top \\ B_*^\top \end{bmatrix} = \\
                & = \begin{bmatrix} A_*, \dot A_* \end{bmatrix} \, S \, S^{-1} \begin{bmatrix} \dot B_*^\top \\ B_*^\top \end{bmatrix}, \quad  S \in \mathbb{R}^{2r \times 2r}, \; \det S \ne 0.
            \end{split}
        \end{equation*} 

        Representing $S$ and its inverse as  block matrices: 
        \begin{equation*}
            S = \begin{bmatrix} S_{11} & S_{12} \\ S_{21} & S_{22} \end{bmatrix}, \quad S^{-1} = \begin{bmatrix} C_{11} & C_{12} \\ C_{21} & C_{22} \end{bmatrix},
        \end{equation*}
        one arrives to (\ref{eq:special_init_form}):
        \begin{equation*}
            \begin{split}
                &\xi_* = \begin{bmatrix} A_*, \dot A_* \end{bmatrix}  
                \begin{bmatrix} S_{11} & S_{12} \\ S_{21} & S_{22} \end{bmatrix} \begin{bmatrix} C_{11} & C_{12} \\ C_{21} & C_{22} \end{bmatrix}
                \begin{bmatrix} \dot {B_*} & B_* \end{bmatrix}^\top, \\
                &\Delta W^{(0)}_* = \begin{bmatrix} A'_*, \dot A'_* \end{bmatrix}
                \begin{bmatrix} S_{11} \\ S_{21} \end{bmatrix}
                \begin{bmatrix} C_{21} & C_{22} \end{bmatrix}
                \begin{bmatrix} {\dot{B}_*'} & {{B'}_*} \end{bmatrix}^\top.
            \end{split}
        \end{equation*} 
        To derive the version that is utilized in practice,
        one may take $S$ to be block-diagonal with $S_{11} = \alpha I_{r}, S_{22} = I_r, \alpha \in \mathbb{R}$.
	\end{proof} 

\section{Riemannion with LOI}
    The complete framework is summarized in Algorithm~\ref{alg:full_fr_riemannion}, which consists of an LOI initialization step followed by \texttt{max\_iters} iterations of the Riemannion optimizer.

    Specifically, the first $3$ lines 
	are dedicated to the computation of LOI via 
	\texttt{BackPropRSVD} for given layer weights and oversampling parameter $p$, that 
	increases the accuracy of the singular approximation. 
    The $4$-th step 
	initializes Heavy-Ball momentum matrices. Therefore, the  asymptotical complexity of this stage is determined by the complexity of Algorithm~\ref{alg:bpr_svd}.
   \begin{algorithm}[h!tp]
		\caption{\texttt{Riemannion with LOI}}
		\label{alg:full_fr_riemannion}
		\textbf{Require:}
			Weights $W \in \mathbb{R}^{m \times n}$, rank $r \in \mathbb{N}$,
			step size $\eta$,
			momentum coefficient $\beta$,
            weight decay coefficient $\gamma$,
			oversampling parameter $p$,
			power-step parameter $q$. \\
		\textbf{Ensure:}
			Tuning parameters $\Delta W^* \in \mathcal{M}_r$.\\
		\textbf{Function:}
		\begin{algorithmic}[1]
			\STATE $A_L, \_, B^\top := \mathtt{BackPropRSVD}(2r, p, q, \mathcal{L}, W)$.
			\hfill\COMMENT{$\mathcal{O}\left((m + n)r^2\right)$}

			\STATE $A_L, B^\top := A_L[\colon, \colon r], \; B[\colon, r\colon ]^\top$.

			\STATE $W' := W - A_L B^\top$. 
			\hfill\COMMENT{$\mathcal{O}\left(mn\right)$}

			\STATE $A_{\mathrm{HB}}, B_{\mathrm{HB}} := 0, 0$.

			\FOR {$i := 0, \dots, \mathtt{max \_ iters}$}
				\STATE $B_R := \mathtt{qr}\left(B\right).\mathtt{Q}.$ 
				\hfill\COMMENT{$\mathcal{O}\left(nr^2\right)$}

				\STATE $\dot{A} := \nabla_{Z_1} 
							\mathcal{L}\left( W' + Z_1B_R^\top + 
							A_L Z_2^\top\right)|_{Z_1=0, Z_2=B}.$
                \hfill\COMMENT{$ = \nabla_{W}\mathcal{L}(W)B_R$}
				\STATE $\dot{B} := \nabla_{Z_2} 
							\mathcal{L}\left(W'+ Z_1B_R^\top + 
							A_L Z_2^\top\right)|_{Z_1=0, Z_2=B}.$
                \hfill\COMMENT{$ = \nabla_{W}\mathcal{L}(W)^{\top}A_L$}
				\STATE $\dot{A}_{\mathrm{prev}}, \dot{B}_{\mathrm{prev}} := 
				\mathtt{ProjectLR}\left((A_{\mathrm{M}}, B_{\mathrm{M}}), A_L, B_R\right)$. 
				\hfill\COMMENT{$\mathcal{O}\left((m + n)r^2\right)$}

				\STATE $\dot{A}, \dot{B} := \beta \dot{A}_{\mathrm{prev}} + 
				(I - A_L A_L^\top) \dot{A}, \;
                \beta \dot{B}_{\mathrm{prev}} + 
				\dot{B}$ 
				\hfill\COMMENT{$\mathcal{O}\left((n+m)r^2\right)$}

                \STATE $\dot A, \dot B := \mathtt{ProjectLR}\left(\mathtt{OrthoLR} \left(A_L, B_R, \dot A, \dot B, r, \right), A_L, B_R\right)$
                \hfill \COMMENT{$\mathcal{O}\left((m + n)r^2 +r^3\right)$}

				\STATE 
				$
					U, \Sigma, V^\top := 
					\mathtt{RetractionLR}\left( \left[-\eta\dot{A},  A_L\right],
					\left[  B_R, -\eta(\dot{B} + \gamma B_R)\right] \right) 
				$ \hfill \COMMENT{$\mathcal{O}\left((m + n)r^2 +r^3\right)$}

				\STATE $A_{\mathrm{HB}}, B_{\mathrm{HB}} := [\dot{A}, A_L], \; [B_R, \dot{B}]$ 

				\STATE $A_L, B := U, \Sigma V^\top$ 
				\hfill\COMMENT{$\mathcal{O}\left(nr^2\right)$}

			\ENDFOR	

			\RETURN $A_L, B$.
		\end{algorithmic}
	\end{algorithm}

\section{Riemann-SGD Algorithm}
In Algorithm \ref{alg:fr_lora_hb}, we present the RiemannSGD version of the fine-tuning algorithm. The first steps are dedicated to the computation of the optimal initialization using \texttt{BackPropRSVD}(Algorithm \ref{alg:bpr_svd}). After initialization, the algorithm follows an Adam-like procedure adapted to the Riemannian setting: it maintains exponentially smoothed estimates of momentum and gradient norms ($4th$ step). The overall cycle encapsulates the computation of the Riemannian gradient at the current point(steps $7-8$), the update of the adaptive momentum terms via  \texttt{ProjectLR}(Algorithm \ref{alg:transport}), and the retraction step that projects the updated tangent direction back onto the manifold(steps $15-17$).
	This approach
	has asymptotical complexity of $\mathcal{O}((m + n)r^2 +r^3)$
	and $(2(q + 1) + \mathtt{max\_iters})$ amount of backward calls.
\begin{algorithm}[!h]
		\caption{\texttt{RiemannSGD with LOI}}
		\label{alg:fr_lora_hb}
		\textbf{Require:}
			Weights $W \in \mathbb{R}^{m \times n}$, rank $r \in \mathbb{N}$,
			step size $\eta$,
			momentum coefficient $\beta$,
			oversampling parameter $p$,
			power-step parameter $q$,
			\texttt{simulate\_Adam},
			Adam momentum coefficient $\gamma$.\\
		\textbf{Ensure:}
			Tuning parameters $\Delta W^* \in \mathcal{M}_r$.\\
		\textbf{Function:}
		\begin{algorithmic}[1]
			\STATE $A_L, \_, B^\top := \mathtt{BackPropRSVD}(2r, p, q, \mathcal{L}, W)$.
			\hfill\COMMENT{$\mathcal{O}\left((m + n)r^2\right)$}

			\STATE $A_L, B^\top := A_L[\colon, \colon r], \; B[\colon, r\colon ]^\top$.

			\STATE $W' := W - A_L B^\top$. 
			\hfill\COMMENT{$\mathcal{O}\left(mn\right)$}

			\STATE $A_{\mathrm{HB}}, B_{\mathrm{HB}}, S_A, S_B := 0$.

			\FOR {$i := 0, \dots, \mathtt{max \_ iters}$}

				\STATE $B_R := \mathtt{qr}\left(B\right).\mathtt{Q}.$ 
				\hfill\COMMENT{$\mathcal{O}\left(nr^2\right)$}

				\STATE $\dot{A} := \nabla_{Z_1} 
							\mathcal{L}\left( W + Z_1B_R^\top + 
							A_L Z_2^\top\right)|_{Z_1=0, Z_2=B}.$
				\STATE $\dot{B} := \nabla_{Z_2} 
							\mathcal{L}\left( W + Z_1B_R^\top + 
							A_L Z_2^\top\right)|_{Z_1=0, Z_2=B}.$

				\STATE $\dot{A}_{\mathrm{prev}}, \dot{B}_{\mathrm{prev}} := 
				\mathtt{ProjectLR}\left((A_{\mathrm{HB}}, B_{\mathrm{HB}}), A_L, B_R\right).$ 
				\hfill\COMMENT{$\mathcal{O}\left((m + n)r^2\right)$}

				\STATE $\dot{A}, \dot{B} := \beta \dot{A}_{\mathrm{prev}} + 
				(1 - \beta)(I - A_L A_L^\top) \dot{A}, \;
                \beta \dot{B}_{\mathrm{prev}} + 
				(1 - \beta)\dot{B}$ 
				\hfill\COMMENT{$\mathcal{O}\left(nr^2\right)$}

				\IF {$\mathtt{simulate\_Adam}$}

					\STATE $S_A, S_B := \gamma \|\dot{A}\|_F + (1 - \gamma)S_A,\;
                    \gamma \|\dot{B}\|_F + (1 - \gamma)S_B$
					\hfill\COMMENT{$\mathcal{O}\left((m+n)r\right)$}

					\STATE $\dot{A}, \dot{B} := \dot{A}/S_A, \dot{B}/S_B$
					\hfill\COMMENT{$\mathcal{O}\left((m + n)r\right)$}

				\ENDIF

				\STATE 
				$
					U, \Sigma, V^\top := 
					\mathtt{RetractionLR}\left( \left[\eta\dot{A},  A_L\right],
					\left[  B_R, \eta\dot{B} + B\right] \right) 
				$ \hfill \COMMENT{$\mathcal{O}\left((m + n)r^2 +r^3\right)$}

				\STATE $A_{\mathrm{HB}}, B_{\mathrm{HB}} := [\dot{A}, A_L], \; [B_R, \dot{B}]$ 

				\STATE $A_L, B := U, \Sigma V^\top$ 
				\hfill\COMMENT{$\mathcal{O}\left(nr^2\right)$}

			\ENDFOR	

			\RETURN $A_L, B$.
		\end{algorithmic}
	\end{algorithm}

\begin{table}[h!tp]
	\centering
    \caption{The average accuracy (in \%) among $8$ tasks of fine-tuned Llama 3.2-1b   
	using different SGD variants, tested on 
    Commonsense Reasoning benchmark. The <<R->> prefix stands for Riemannian, 
    the postfix <<-LOI>> stands for locally optimal initialization. LoRA rank is set to 16. 
	In all experiments excpet for <<-LOI>> the $A$ factor in $AB^\top$ has orthonormal columns and $B$ is zero.}
	\scriptsize
	\setlength{\tabcolsep}{3pt}
    \begin{tabular}{lrrrrrrrrr}
        \toprule
        Task & BoolQ & PIQA & SIQA & hella-& wino-& ARC-E & ARC-C & OBQA & All \\
        Initialization &  &  &  &  swag &  grande &  &  & &  \\
        \midrule
        Raw & $40.1$ & $55.4$ & $50.3$ & $25.8$ & $50.0$ & $61.9$ & $41.8$ & $42.8$ & $46.0$ \\
        LoRA & $64.0_{\pm 0.2}$ & $75.3_{\pm 0.3}$ & $69.6_{\pm 0.4}$ & $82.2_{\pm 0.1}$ & $53.0_{\pm 1.0}$ & $75.4_{\pm 1.7}$ & $56.5_{\pm 0.6}$ & $67.1_{\pm 2.3}$ & $67.9_{ \pm 0.4}$ \\
        LoRA-LOI & $64.9_{\pm 0.4}$ & $76.8_{\pm 0.3}$ & $71.2_{\pm 2.2}$ & $84.1_{\pm 0.1}$ & $56.7_{\pm 0.6}$ & $77.0_{\pm 4.2}$ & $61.1_{\pm 3.6}$ & $68.9_{\pm 3.3}$ & $70.1_{ \pm 1.6}$ \\
        RSLoRA & $64.5_{\pm 0.2}$ & $77.2_{\pm 0.3}$ & $68.1_{\pm 0.8}$ & $86.3_{\pm 0.1}$ & $58.2_{\pm 0.5}$ & $76.5_{\pm 1.5}$ & $58.6_{\pm 2.9}$ & $70.0_{\pm 1.9}$ & $70.0_{ \pm 0.8}$ \\
        RiemannLoRA & $\mathbf{65.1}_{\pm 0.4}$ & $77.7_{\pm 0.3}$ & $69.1_{\pm 1.5}$ & $85.8_{\pm 0.2}$ & $58.5_{\pm 0.4}$ & $74.4_{\pm 1.9}$ & $56.9_{\pm 1.3}$ & $69.3_{\pm 2.5}$ & $69.6_{ \pm 0.9 }$ \\
        RiemannLoRA-LOI & $\mathbf{65.2}_{\pm 0.4}$ & $\mathbf{79.4}_{\pm 0.4}$ & $\mathbf{75.6}_{\pm 0.4}$ & $\mathbf{87.3}_{\pm 0.1}$ & $\mathbf{62.4}_{\pm 0.4}$ & $\mathbf{79.9}_{\pm 0.8}$ & $\mathbf{63.6}_{\pm 1.9}$ & $\mathbf{73.8}_{\pm 0.5}$ & $\mathbf{73.4}_{  \pm 0.3 }$ \\
        \bottomrule
    \end{tabular}
    \label{table:sgd_comp}
\end{table}

\section{LMO bounds}\label{sec:lmo_bounds}

Let $A \in \mathcal{T}_W \mathcal{M}_r$ be a vector from the tangent space. Consider the problem
\begin{equation}
\begin{split}\label{eq:task1}
& \langle A, X \rangle \to \max_X,\\
& \text{s.t.} \quad \|X\|_2 \le 1,
\end{split}
\end{equation}
whose (any) solution will be denoted by $X_\ast = \mathtt{Ortho}(A)$. Denote $f_\ast = \langle A, X_\ast \rangle$ the optimal value.

Consider the restricted (LMO) problem
\begin{equation}
\begin{split}\label{eq:task2}
& \langle A, X \rangle \to \max_X,\\
& \text{s.t.} \quad \|X\|_2 \le 1,\\
&  X \in \mathcal{T}_W \mathcal{M}_r.
\end{split}
\end{equation}
Let $X^\ast$ denote an optimizer of the task derived in (\ref{eq:task2}) and $f^\ast = \langle A, X^\ast \rangle$ its optimal value.

\begin{proposition}\label{prop:bound}
Assume $A\in\mathcal T_W\mathcal M_r$ and let $P$ be the orthogonal projector onto $\mathcal T_W\mathcal M_r$. If
\[
\|X_\ast - P X_\ast\|_2 \le \varepsilon
\]
for some $\varepsilon \ge 0$, then
\[
\frac{1}{1+\varepsilon}\, f_\ast \le f^\ast \le f_\ast.
\]
\end{proposition}

\begin{proof}
Since $A\in\mathcal T_W\mathcal M_r$, we have $PA=A$ and $P=P^\top$. By assumption $\|X_\ast-PX_\ast\|_2\le\varepsilon$ and, because $\|X_\ast\|_2=1$, the reverse triangle inequality yields
\[
\big|\|X_\ast\|_2 - \|P X_\ast\|_2\big| \le \varepsilon,
\]
hence
\[
1-\varepsilon \le \|P X_\ast\|_2 \le 1+\varepsilon.
\]

Define the normalized matrix
\[
Y := \frac{P X_\ast}{\|P X_\ast\|_2}\in\mathcal T_W\mathcal M_r,
\qquad \|Y\|_2=1.
\]
By definition of $f^\ast$ (the maximum of $\langle A,\cdot\rangle$ over $\{X\in\mathcal T_W\mathcal M_r:\|X\|_2\le1\}$) we have
\[
f^\ast \ge \langle A, Y\rangle
= \frac{\langle A, P X_\ast\rangle}{\|P X_\ast\|_2}
= \frac{\langle P A, X_\ast\rangle}{\|P X_\ast\|_2}
= \frac{\langle A, X_\ast\rangle}{\|P X_\ast\|_2}
= \frac{f_\ast}{\|P X_\ast\|_2}.
\]
Using the inequality $\|P X_\ast\|_2 \le 1+\varepsilon$ we obtain the claimed lower bound
\[
f^\ast \ge \frac{f_\ast}{1+\varepsilon}.
\]

The upper bound $f^\ast \le f_\ast$ is trivial because the feasible set of the task derived in (\ref{eq:task2}) is a subset of the feasible set of the problem described in (\ref{eq:task1}). Combining the two inequalities gives
\[
\frac{1}{1+\varepsilon}\, f_\ast \le f^\ast \le f_\ast,
\]
which completes the proof.
\end{proof}

\begin{proposition}\label{prop:eps_le_one}
(Upper bound on the projection residual.) With the notation above, the residual
\[
\varepsilon := \|X_\ast - P X_\ast\|_2
\]
satisfies $\varepsilon \le 1$.
\end{proposition}

\begin{proof}
Write the thin SVD of the base point $W$ as $W = U_r \Sigma_r V_r^\top$, where $U_r\in\mathtt{St}(m,r)$ and $V_r\in\mathtt{St}(n,r)$ have $r$ orthonormal columns. Let $U_\perp$ and $V_\perp$ be orthonormal complements so that $[U_r\;U_\perp]\in\mathbb{R}^{m\times 2r}$ and $[V_r\;V_\perp]\in\mathbb{R}^{n\times 2r}$ are orthogonal matrices (here we assume $m,n\ge 2r$; if $m$ or $n$ is smaller the argument is adapted in the obvious way).

Because $A\in\mathcal T_W\mathcal M_r$, one can write (using a suitable $2r\times 2r$ block representation)~\citep{vandereycken2013low}
\[
A = \begin{bmatrix} U_r & U_\perp \end{bmatrix}
\begin{bmatrix} \tilde{\mathcal{A}} & \tilde{\mathcal{B}} \\[4pt] \tilde{\mathcal{C}} & 0 \end{bmatrix}
\begin{bmatrix} V_r^\top \\[2pt] V_\perp^\top \end{bmatrix},
\]
where $\tilde{\mathcal{A}},\tilde{\mathcal{B}},\tilde{\mathcal{C}}\in\mathbb R^{r\times r}$ (the $(2,2)$ block vanishes for tangent-space elements of this form). The rank of $A$ is therefore at most $2r$.

Let $X_\ast=\mathtt{Ortho}(A)$ be a maximizer of (\ref{eq:task1}); represent $X_\ast$ in the same block basis,
\[
X_\ast = \begin{bmatrix} U_r & U_\perp \end{bmatrix}
\begin{bmatrix} \mathcal{A} & \mathcal{B} \\[4pt] \mathcal{C} & \mathcal{D} \end{bmatrix}
\begin{bmatrix} V_r^\top \\[2pt] V_\perp^\top \end{bmatrix},
\]
where the $2r\times 2r$ block matrix is orthogonal,
\begin{equation}\label{eq:ortho_g}
\begin{bmatrix} \mathcal{A} & \mathcal{B} \\[4pt] \mathcal{C} & \mathcal{D} \end{bmatrix}\in\mathcal O(2r),
\end{equation}
where $\mathcal{O}(2r)$ denotes the orthogonal group in $\mathbb{R}^{2r \times 2r}$.
By Lemma~\ref{lemma:proj_block} below we have
\[
P X_\ast = \begin{bmatrix} U_r & U_\perp \end{bmatrix}
\begin{bmatrix} \mathcal{A} & \mathcal{B} \\[4pt] \mathcal{C} & 0 \end{bmatrix}
\begin{bmatrix} V_r^\top \\[2pt] V_\perp^\top \end{bmatrix}.
\]
Therefore
\[
X_\ast - P X_\ast
= \begin{bmatrix} U_r & U_\perp \end{bmatrix}
\begin{bmatrix} 0 & 0 \\[4pt] 0 & \mathcal{D} \end{bmatrix}
\begin{bmatrix} V_r^\top \\[2pt] V_\perp^\top \end{bmatrix}
= U_\perp \mathcal{D} V_\perp^\top,
\]
and hence
\[
\|X_\ast - P X_\ast\|_2 = \|\mathcal{D}\|_2.
\]

Because the block matrix in (\ref{eq:ortho_g}) is orthogonal, its lower block
\[
\begin{bmatrix} \mathcal{C} \\[2pt] \mathcal{D} \end{bmatrix}\in\mathbb R^{2r\times r}
\]
has orthonormal columns. Thus $\big\|\begin{smallmatrix}\mathcal{C}\\[2pt]\mathcal{D}\end{smallmatrix}\big\|_2 = 1$, and in particular $\|\mathcal{D}\|_2 \le 1$. Consequently $\varepsilon = \|X_\ast - P X_\ast\|_2 \le 1$, as claimed.
\end{proof}

\begin{lemma}\label{lemma:proj_block}
Let
\[
W = U_r \Sigma_r V_r^\top, \quad U_r\in\mathtt{St}(m,r),\; V_r\in\mathtt{St}(n,r),
\]
and write an arbitrary matrix $X\in\mathbb R^{m\times n}$ in the block basis determined by $[U_r\;U_\perp]$ and $[V_r\;V_\perp]$,
\[
X = \begin{bmatrix} U_r & U_\perp \end{bmatrix}
\begin{bmatrix} \mathcal{A} & \mathcal{B} \\[4pt] \mathcal{C} & \mathcal{D} \end{bmatrix}
\begin{bmatrix} V_r^\top \\[2pt] V_\perp^\top \end{bmatrix}.
\]
Then the orthogonal projector $P$ onto $\mathcal T_W\mathcal M_r$ acts as
\[
P X = \begin{bmatrix} U_r & U_\perp \end{bmatrix}
\begin{bmatrix} \mathcal{A} & \mathcal{B} \\[4pt] \mathcal{C} & 0 \end{bmatrix}
\begin{bmatrix} V_r^\top \\[2pt] V_\perp^\top \end{bmatrix}.
\]
\end{lemma}

\begin{proof}
One convenient expression for the orthogonal projector onto the tangent space at $W=U_r\Sigma_r V_r^\top$ is
\[
P(\cdot) = (\cdot) - (I - U_r U_r^\top)\,(\cdot)\,(I - V_r V_r^\top).
\]
Applying this to $X$ written in the chosen block basis yields
\[
\begin{aligned}
P X
&= X - (I-U_r U_r^\top) X (I - V_r V_r^\top) = \\
&= \begin{bmatrix} U_r & U_\perp \end{bmatrix}
\begin{bmatrix} \mathcal{A} & \mathcal{B} \\[4pt] \mathcal{C} & \mathcal{D} \end{bmatrix}
\begin{bmatrix} V_r^\top \\[2pt] V_\perp^\top \end{bmatrix}
- \begin{bmatrix} 0 & U_\perp \end{bmatrix}
\begin{bmatrix} \mathcal{A} & \mathcal{B} \\[4pt] \mathcal{C} & \mathcal{D} \end{bmatrix}
\begin{bmatrix} 0 \\[2pt] V_\perp^\top \end{bmatrix} \\
&= \begin{bmatrix} U_r & U_\perp \end{bmatrix}
\left(
\begin{bmatrix} \mathcal{A} & \mathcal{B} \\[4pt] \mathcal{C} & \mathcal{D} \end{bmatrix}
-
\begin{bmatrix} 0 & 0 \\[4pt] 0 & \mathcal{D} \end{bmatrix}
\right)
\begin{bmatrix} V_r^\top \\[2pt] V_\perp^\top \end{bmatrix} \\
&= \begin{bmatrix} U_r & U_\perp \end{bmatrix}
\begin{bmatrix} \mathcal{A} & \mathcal{B} \\[4pt] \mathcal{C} & 0 \end{bmatrix}
\begin{bmatrix} V_r^\top \\[2pt] V_\perp^\top \end{bmatrix},
\end{aligned}
\]
which proves the lemma.
\end{proof}

\begin{remark}
In our experiments we observed that the residual $\varepsilon$ is typically small (in our reported runs it did not exceed $0.1$).
\end{remark}

\section{Dataset}%
\label{sec:dataset}
\subsection{Commonsense reasoning}%
\label{sub:commonsense_reasoning}
Following the approach of \citet{hu2023llm}, we combine the training datasets from all 8 tasks to form the final training set and evaluate performance on each task's individual test dataset.
        We adopt queries structure from \citet{hu2023llm} to Llama 3.2 instruct template. The prompt structure is demonstrated in Table \ref{table:csr_prompts}.
	\begin{table}[h!tp]
		\centering
		\caption{The structure of the queries for the Commonsense reasoning dataset}
		\small
		\begin{tabular}{lll}
			\toprule
			Task 		& Role & Fine-tuning Data Template \\
			\midrule
                BoolQ       & system & Please answer the following question with True or False. \\
                            &      & Follow the answer format, full answer not needed. \\
                            & user & Question:  [QUESTION] \\
                            &      & Answer format: True/False \\
                            & assistant & The correct answer is [ANSWER] \\
                \midrule
                PIQA       & system & Please choose the correct solution to the question. \\
                            &      & Follow the answer format, full answer not needed. \\
                            & user & Question:  [QUESTION] \\
                            &      & Solution1: [SOLUTION\_1] \\
                            &      & Solution2: [SOLUTION\_2] \\
                            &      & Answer format: Solution1/Solution2 \\
                            & assistant & The correct answer is [ANSWER] \\
                \midrule
                SIQA       & system & Please choose the correct answer to the question \\
						   & & based on the context provided. \\
                            &      & Follow the answer format, full answer not needed. \\
                            & user & Context:  [CONTEXT] \\
                            &      & Question:  [QUESTION] \\
                            &      & A: [ANSWER\_A] \\
                            &      & B: [ANSWER\_B] \\
                            &      & C: [ANSWER\_C] \\
                            &      & Answer format: A/B/C \\
                            & assistant & The correct answer is [ANSWER] \\
                \midrule
                hellaswag       & system & Please choose the correct ending to complete the given sentence. \\
                            &      & Follow the answer format, full answer not needed. \\
                            & user & [ACTIVITY\_lABEL]: [CONTEXT] \\
                            &      & Ending1: [ENDING\_1] \\
                            &      & Ending2: [ENDING\_2] \\
                            &      & Ending3: [ENDING\_3] \\
                            &      & Ending4: [ENDING\_4] \\
                            &      & Answer format: Ending1/Ending2/Ending3/Ending4 \\
                            & assistant & The correct answer is [ANSWER] \\
                \midrule
                winogrande       & system & Please choose the correct answer to fill \\
								 & & in the blank to complete the given sentence. \\
                            &      & Follow the answer format, full answer not needed. \\
                            & user & Sentence:  [SENTENCE] \\
                            &      & Option1: [OPTION\_1] \\
                            &      & Option2: [OPTION\_2] \\
                            &      & Answer format: Option1/Option2 \\
                            & assistant & The correct answer is [ANSWER] \\
                \midrule
                ARC-e \& ARC-c       & system & Please choose the correct answer to the question. \\
                            &      & Follow the answer format, full answer not needed. \\
                            & user & Question:  [QUESTION] \\
                            &      & Answer1: [ANSWER\_1] \\
                            &      & Answer2: [ANSWER\_2] \\
                            &      & Answer3: [ANSWER\_3] \\
                            &      & Answer4: [ANSWER\_4] \\
                            &      & Answer format: Answer1/Answer2/Answer3/Answer4 \\
                            & assistant & The correct answer is [ANSWER] \\
                \midrule
                OBQA       & system & Please choose the correct answer to the question. \\
                            &      & Follow the answer format, full answer not needed. \\
                            & user & Question:  [QUESTION] \\
                            &      & Answer1: [ANSWER\_1] \\
                            &      & Answer2: [ANSWER\_2] \\
                            &      & Answer3: [ANSWER\_3] \\
                            &      & Answer4: [ANSWER\_4] \\
                            &      & Answer format: Answer1/Answer2/Answer3/Answer4 \\
                            & assistant & The correct answer is [ANSWER] \\
			\bottomrule
		\end{tabular}
		\label{table:csr_prompts}
	\end{table}

\section{Hyperparameters}%
\label{sec:hyperparameter}
For each optimization method, we carefully preselected the learning rate (step size). 
For the Riemannion method, we set the momentum to $0.9$, and for the Muon method, we set the momentum to $0.95$. 
In both cases, we additionally tuned the weight-decay hyperparameter (see Algorithm~\ref{alg:fr_riemannion}). 
The final choices of these hyperparameters are summarized in Table~\ref{table:adam_commonsense_hyper}.

In the Commonsense reasoning benchmark, the hyperparameters are reported in Table \ref{table:sgd_commonsense_hyper} for the SGD-like methods and in Table \ref{table:adam_commonsense_hyper} for the Adam-like methods. 
For the LOI method, we selected the following hyperparameters: 
the backprop RSVD oversampling parameter was set to $r_k = 16$, 
the backprop powerstep parameter was set to $q = 1$, 
and the multiplier $\alpha$ was set to $\frac{-0.01}{\sqrt{r}}$.

    \begin{table}[h!tp]
    \centering
    \caption{The parameters for different Adam variants for fine-tuning on the Commonsense reasoning dataset}
    \begin{tabular}{lcc}
        \toprule
        Optimizer &  learning rate & weight decay\\
        \midrule
            Adam & $0.0002$ & 0.01 \\
            DoRA & $0.0003$ & 0.01 \\
            Muon & $0.0005$ & 0.2 \\
            DoneRITE & $0.0005$ & 0.0001 \\
            RPrecAdamW & $0.0005$ & 0.5 \\
            Riemannion & $0.0001$ & 0.00316 \\ 
        \bottomrule
    \end{tabular}
    \label{table:adam_commonsense_hyper}
    \end{table}

    \begin{table}[h!tp]
    \caption{The parameters for different SGD variants (RiemannLoRA with \texttt{simulate\_Adam} flag disabled) for fine-tuning on the Commonsense reasoning dataset}
    \centering
    \begin{tabular}{lc}
        \toprule
        Optimizer &  learning rate\\
        \midrule
            LoRA & $0.1$ \\
            LoRA-LOI & $0.06$ \\
            RSLoRA & $0.1$ \\
            RiemannLoRA & $0.1$ \\
            RiemannLoRA-LOI & $0.07$ \\
        \bottomrule
    \end{tabular}
    \label{table:sgd_commonsense_hyper}
	\end{table}

	The non-tuned hyperparameters used for experiments on the Commonsense Reasoning dataset are presented in \ref{table:rest_nonselected_hyper}.
    \begin{table}[h!tp]
    \centering
    \caption{Other non-tuned hyperparameter configurations for experiments on Commonsense reasoning dataset}
    \begin{tabular}{lc}
    \toprule
    Dataset & Commonsense reasoning \\
    Hyperparameters & \\
    \midrule
    Rank $r$ & $16$ \\
    Dropout & $0.05$ \\
    LR Scheduler & Linear \\
    Batch size & $64$ \\
    Epochs & $2$ \\
    Warmup ratio & $0.1$ \\
    \bottomrule
\end{tabular}
\label{table:rest_nonselected_hyper}
\end{table}

\section{Subject-driven generation}%
\label{sec:sdg}

	\paragraph{Training details} We used Stable Diffusion-2-base model with a batch size of $4$ for all experiments. For both methods, we set the betas to $0.9$ and $0.999$ and the weight decay to $0.1$.  In all variations of our approach, we set $q = 15$, $p$ = rank and 

    $\alpha = -\frac{1}{\sqrt{\text{rank}}}$. We used a learning rate of 2e-5 to train both our and the LoRA models, which were used to generate  images shown in Figures \ref{fig:sdg_visual} and \ref{fig:sdg_vis_ablat}.

\paragraph{Evaluation details} We used the DreamBooth dataset, which contains 25 different prompts and 30 various concepts. Due to computational costs, we only used half of the proposed concepts to evaluate metrics: \texttt{can}, \texttt{candle}, \texttt{cat}, \texttt{cat2}, \texttt{colorful\_sneaker}, \texttt{dog2}, \texttt{dog3}, \texttt{dog5}, \texttt{dog6}, \texttt{dog7}, \texttt{dog8}, \texttt{fancy\_boot}, \texttt{grey\_sloth\_plushie}, \texttt{pink\_sunglasses}, \texttt{vase}. To measure the similarity between the original concept and the generated images, we used Image Similarity (IS): we synthesized 30 images for the base prompt <<a photo of a V*>> and 10 images for each of 25 editing prompts (like <<a V* on top of a dirt road>>). We then measured the average pairwise cosine similarity with reference photos of the concept using the DINO model. To check the correspondence between the generated images and the textual prompts, we calculated Text Similarity (TS): we evaluated each concept with each of 25 prompts, synthesizing 10 images per prompt, and calculating the average pairwise cosine similarity with the prompts using the CLIP ViTB/32 model.

    \begin{figure}
        \centering
        \includegraphics[width=1\linewidth]{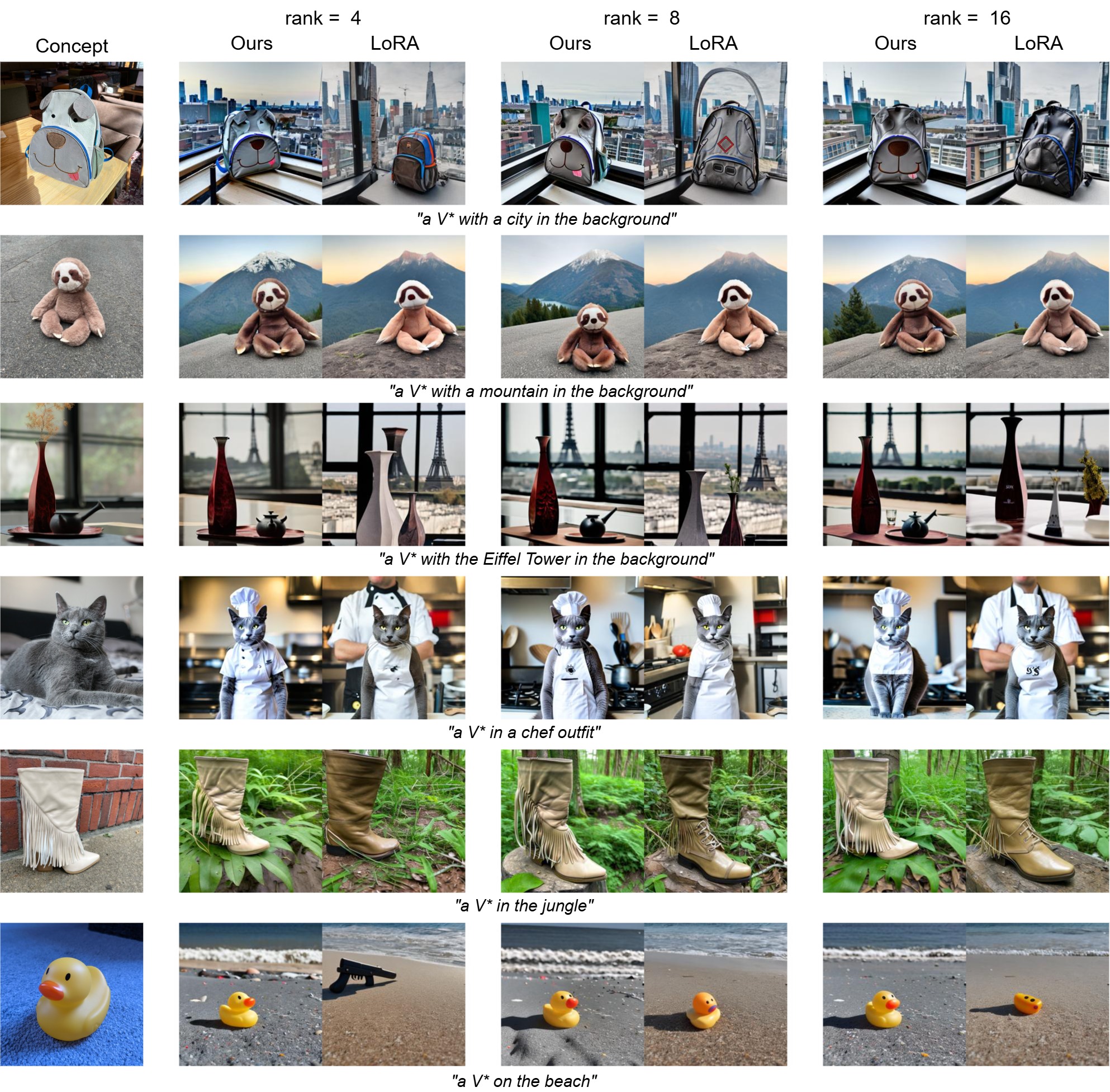}
        \caption{Additional visual comparison of our method and LoRA, checkpoint $600$}
        \label{fig:sdg_vis_ablat}
    \end{figure}

    \paragraph{Additional results} Figure \ref{fig:sdg_vis_ablat} shows an additional visual comparison of our method and LoRA after 600 training steps. As can be seen, our method learns the concept much faster than the original LoRA, while preserving editing capabilities.

\section{Additional experimental results}%
\label{sec:additional_experiment_results}

\subsection{Ablation study on initialization}\label{sub:study_initialization}

The fine-tuning optimization was carried out by AdamW (\citet{loshchilov2017decoupled})
First of the first, for every approach tested  
we preselected a suitable optimization 
step sizes. The list of all selected hyper-parameters for each method is 
specified in the Appendix \ref{sec:hyperparameter}. Table \ref{table:init_dependence} contains the accuracy 
of the trained model's responses on the test dataset. The notation <<LoRA-A>> 
in the table means the utilization of the vanilla LoRA with non-zero $A$, 
the notation <<LoRA-B>> means the same with non-zero $B$, the notations
<<Stiefel-A>> and <<Stiefel-B>> indicate 
vanilla LoRA with orthonormal initialization (for matrix $A$ and $B$ respectively). The naming of <<Stiefel-both>> involves taking the 
factors $A, B$ with orthonormal columns and with initialization like in (\ref{eq:intro_splitting}).

\begin{table}[h!tp]
    \centering
    \caption{The average accuracy among $8$ tasks of fine-tuned Llama 3.2-1b via Adam
	using different LoRA initialization variants, tested on 
    Commonsense reasoning benchmark. <<LOI>> stands for locally optimal initialization. 
    LoRA rank is set to 16.}
    \setlength{\tabcolsep}{3pt}
    \begin{tabular}{lrrrrrrrrr}
        \toprule
        Task & BoolQ & PIQA & SIQA & hella-& wino-& ARC- & ARC- & OBQA & All \\
        Initialization &  &  &  &  swag &  grande &  E&  C& &  \\
        \midrule
        Raw & $40.1$ & $55.4$ & $50.3$ & $25.8$ & $50.0$ & $61.9$ & $41.8$ & $42.8$ & $46.0$ \\
        LoRA-A & $66.2$ & $80.2$ & $76.0$ & $87.0$ & $65.1$ & $78.4$ & $63.6$ & $76.3$ & $74.1$ \\
        LoRA-B & $65.9$ & $77.9$ & $74.2$ & $82.9$ & $61.2$ & $73.9$ & $60.6$ & $72.2$ & $71.1$ \\
        Pissa & $66.4$ & $80.1$ & $75.6$ & $87.6$ & $63.1$ & $77.6$ & $64.3$ & $75.0$ & $73.7$ \\
        Stiefel-A & $65.4$ & $79.5$ & $74.9$ & $87.2$ & $62.4$ & $79.3$ & $62.3$ & $75.5$ & $73.3$ \\
        Stiefel-B & $66.4$ & $80.6$ & $76.0$ & $87.5$ & $64.3$ & $78.5$ & $64.9$ & $74.6$ & $74.1$ \\
        Stiefel-both & $66.3$ & $79.7$ & $75.8$ & $86.4$ & $64.0$ & $78.3$ & $62.4$ & $73.9$ & $73.4$ \\
        LOI & $65.6$ & $81.3$ & $75.7$ & $87.7$ & $65.7$ & $77.9$ & $65.0$ & $75.2$ & $74.3$ \\
        LoRA-GA & $65.4$ & $77.1$ & $74.9$ & $84.5$ & $58.3$ & $75.4$ & $61.9$ & $72.2$ & $71.2$ \\
        \bottomrule
    \end{tabular}
    \label{table:init_dependence}
\end{table}

\section{Computational cost}%
\label{sec:overhead}

To measure execution time, we used a virtual server instantiated on a compute node equipped with an \texttt{Intel Xeon Gold 6240R} CPU and \texttt{Nvidia Tesla V100} GPU. The virtual machine was provisioned with 8 CPU cores, 16 GB of RAM and a single Tesla V100 GPU. Experiments were executed using the Hugging Face \texttt{transformers} library; timing corresponds to the execution time of the \texttt{trainer.train()} call. Measurements were conducted according to the following procedure. For each combination of method, LoRA adapter rank, and batch size, four timing runs were performed and the minimum execution time was selected. The measured quantity was the time spent on 16 optimizer steps. Time spent on initialization was excluded. For each combination of rank and batch size, execution times for each of the methods were measured sequentially. In the present work, rank refers to the rank of the LoRA adapter, and method denotes one of two optimization schemes: Adam (baseline) and Riemannion (proposed method).

Figure \ref{fig:overhead_8b} shows the relative increase in per-step execution time of the proposed Riemannion method compared with Adam, computed as
\begin{figure}[h!tp]
    \centering
    \includegraphics[width=0.6\linewidth]{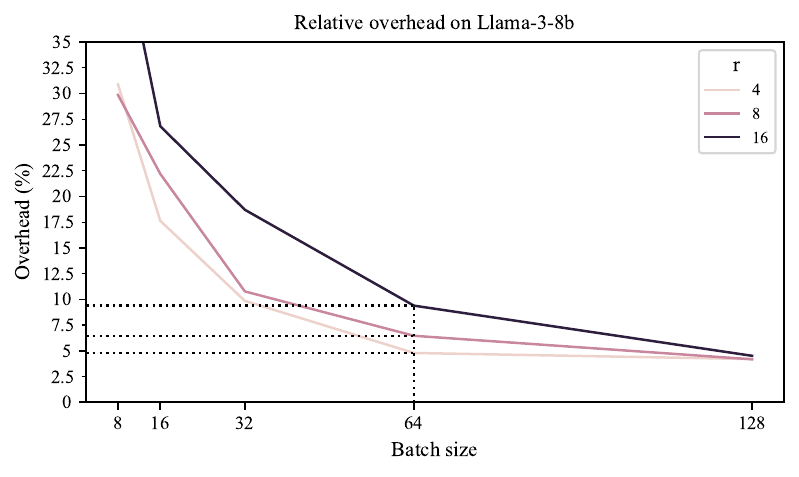}
    \caption{Relative Time Cost $\left( \text{calculated as } \left(T_\text{Riemannion} - T_\text{Adam}\right) \big / {T_\text{Adam}} \right)$ of Riemannion vs. Adam during Llama 3-8B fine-tuning, as a function of LoRA rank and batch size.}
    \label{fig:overhead_8b}
\end{figure}

\section{LLM usage}
We used an LLM only for minor language polishing to improve readability and grammar; it was not involved in research ideation, methodology, analysis, or substantive writing, and all research ideas and arguments were developed entirely by the authors.

\end{document}